\theoremstyle{plain}
\newtheorem{prop}{Proposition}
\newtheorem{lem}{Lemma}
\theoremstyle{definition}
\newtheorem{defn}{Definition}
\theoremstyle{remark}
\newtheorem{rem}{Remark}
\newcommand{\Hrond}{\mathcal H}
\newcommand{\Irond}{\mathcal I}
\newcommand{\Lrond}{\mathcal L}
\newcommand{\Srond}{\mathcal S}
\newcommand{\Trond}{\mathcal T}
\newcommand{\vertiii}[1]{{\left\vert\kern-0.25ex\left\vert\kern-0.25ex\left\vert #1 
    \right\vert\kern-0.25ex\right\vert\kern-0.25ex\right\vert}}
\newcommand{\set}[1]{\left\{ #1\right\}}
\renewcommand{\leq}{\leqslant}
\newcommand{\Prob}{\mathbb{P}}
\newcommand{\Xspace}{\mathsf{X}}
\definecolor{charteOrange}{RGB}{255,121,0}
\definecolor{charteYellow}{RGB}{255,220,0}
\definecolor{charteViolet}{RGB}{145,100,205}
\definecolor{chartePink}{RGB}{255,180,230}
\definecolor{charteGreen}{RGB}{80,190,135}
\definecolor{charteBlue}{RGB}{75,180,230}
\definecolor{charteGray}{RGB}{143,143,143}
\title{DivMerge: A divergence-based model merging method for multi-tasking}
\author{
\textbf{Brahim Touayouch\textsuperscript{1,2}}\quad
 \textbf{Lo{\"i}c Fosse\textsuperscript{1,3}}\quad
 \textbf{Géraldine Damnati\textsuperscript{1}}\quad
 \textbf{Gwénolé Lecorvé\textsuperscript{1}}\quad 
\\
 \textsuperscript{1}Orange Research, Lannion, France
 \\
 \textsuperscript{2}École polytechnique, Institut polytechnique de Paris, Palaiseau, France
 \\
 \textsuperscript{3}CNRS, LIS, Aix Marseille Université, France
 \\
 \textbf{Contact:} {\tt first.last@orange.com}
}
\begin{document}

\tikzstyle{fleche} = [draw, thick, color=black, -latex']
\tikzstyle{dfleche} = [draw, thick, color=black, latex'-latex']
\tikzstyle{flecheDashed} = [draw, dashed, color=black, -latex']
\tikzstyle{DflecheDashed} = [draw, dashed, color=black, latex'-latex']

\maketitle
\begin{abstract}
    Multi-task learning (MTL) is often achieved by merging datasets before fine-tuning, but the growing availability of fine-tuned models has led to new approaches such as model merging via task arithmetic. 
    A major challenge in this setting is task interference, which worsens as the number of tasks increases. We propose a method that merges models trained on different tasks into a single model, maintaining strong performance across all tasks. Our approach leverages Jensen-Shannon divergence to guide the merging process without requiring additional labelled data, and automatically balances task importance. Unlike existing methods, our approach remains robust as the number of tasks grows and consistently outperforms prior work.
\end{abstract}

\section{Introduction}
\label{sec:introduction}

Current transformer based language models~\cite{vaswani2017attention,brown2020language} have demonstrated remarkable efficiency in handling a wide range of tasks within a single unified architecture. This led to the creation of the so-called Instruct models~\cite{shengyu2023instruction}, which are now state-of-the-art in almost all NLP tasks.
However, creating these models is very expensive and relies on large data collections and models.
With this goal of reducing costs in mind, a paradigm has been revived: {\bf model merging}. This paradigm rooted in ensemble methods consists in combining parameters of models specialized on specific tasks in order to create a new one that has new properties such as multi-tasking --- to only cite this one. 
This paradigm is all the more interesting given the large number of specialized models for specific tasks available on collaborative platforms such as {\tt HuggingFace}~\cite{wolf2019huggingface}.
Thus, since the work of~\citet{ilharco2022editing} a whole series of studies on model merging have been published, proposing many different methods to combine model parameters in order to produce a multi-tasking type model~\cite{goddard2024arcee,yang2024modelmergingllmsmllms}.
The motivation behind the creation of these different methods is to answer the following question: how can models trained on different tasks be combined without losing performance on each task? In the literature this problem is also mentioned as {\bf interference} between models. 
However, the wide variety of methods available and the lack of {\it consensus} in the production of methods seem to suggest that this question remains unanswered.
In this paper, we propose a new model merging method that allows us to create a so-called multi-task model. Our method formally addresses the interference problem mentioned above, which can adversely affect the model resulting from the combination. More formally, the contributions of this paper are the following three points:
\begin{itemize}[leftmargin=*]
    \item[1.] \textbf{A novel merging method.} We propose a merging method that is grounded in information theory~\cite{Cover91}. This method automatically learns how to combine parameters of different models in a data driven but reference free setting to achieve the multi-task property. We formally demonstrate that our method is linked to classical multi-task learning~\cite{caruanaMultitaskLearning1997} and is constructed in a way that it minimizes interferences between models by respecting weight disentanglement defined by~\citet{ortiz2023task}.
    \vspace{-0.25cm}
    \item[2.] \textbf{Better performances.} On a classical set-up with only two tasks to merge we illustrate that our method is the best one in average among classical state of the art methods. 
    \vspace{-0.25cm}
    \item[3.] \textbf{Better scalability.} We empirically show that our method scales better when the number of tasks to merge increases. This clearly illustrates that our method is more effective at limiting the possible effects of interference between the different models we wish to combine.
\end{itemize}

\section{Related Work}
\label{sec:rel-work}

\paragraph{Multi-Task Learning.}
Multi-task learning~\cite{caruanaMultitaskLearning1997} refers to methods that produce a model capable of solving several tasks. Classical multi-task learning generally consists in combining datasets from multiple tasks and training a model on this union, which is justified by classical results such as Stein's paradox~\cite{stein1956inadmissibility}. This philosophy is at the core of the production of most current models such as T5~\cite{raffel2020exploring} and more recently Instruct type models~\cite{shengyu2023instruction}. 
Although the results of these models are now considered state-of-the-art in NLP, this approach is not without its issues. Indeed, several studies~\cite{baxterModelInductiveBias2000,ben-davidExploitingTaskRelatedness2003,fiftyEfficientlyIdentifyingTask2021,standley2020tasks,jeong2025selective,maurer2016benefit} show that the choice of tasks is important in order to limit effects such as interference~\cite{yu2020gradient}: some tasks may have a negative effect on others.
In this study we propose a novel method, grounded in Information Theory~\cite{Cover91}, to produce a multi-task model. Our method is theoretically connected to the classical multi-task learning, and model merging.

\paragraph{Model Merging.}
While the idea of combining several models has its roots in ensemble methods originally developed for variance reduction~\cite{10.1007/3-540-45014-9_1}, model merging refers to methods that combine models in the parameter space to produce a new model that has new properties that go beyond variance reduction. While most applications of model merging are described by~\citet{yang2024model}, we recall here the most popular ones.
One objective of model merging is of course variance reduction ({\it i.e.} better generalization)~\cite{jin2022dataless,matena2022merging,ferretprojeter,izmailov2018averaging,wortsman2022modelsoupsaveragingweights}, with the goal of producing a more reliable model on a single target task by merging models trained on the same task with different initializations or hyper-parameter settings.
The most popular objective in model merging is the multi-task one~\cite{ilharco2022editing,yang2024adamergingadaptivemodelmerging,yu2024languagemodelssupermario,pfeifferAdapterFusionNonDestructiveTask2021a}: by merging models specialized on different tasks, the resulting model should achieve good performance on all tasks. \citet{zhou2024metagpt} and~\cite{ortiz2023task} provide theoretical insights into why model merging can work well for this multi-task objective.
Another objective is task unlearning~\cite{ilharco2022editing,kuo2025exact,kim2024negmerge}: by merging certain models (via addition and negation), we seek to remove or "forget" specific components.
Finally, modular learning~\cite{ballard1987modular,pfeifferModularDeepLearning2023,chronopoulouLanguageTaskArithmetic2023} is an interesting but less explored objective, which consists of creating a model that performs well on a task by merging models trained on other (possibly unrelated) tasks, leveraging the notion of transfer between tasks.
In this study, we propose a new model merging method with the goal of multi-task learning.

\paragraph{Merging Methods.}
While the literature offers a wide range of merging methods, some of them stand out with interesting results and properties. Since this study is focused on multi-task learning, we give a quick overview of methods designed for this.
The most straightforward and simple approach is model averaging~\cite{wortsman2022modelsoupsaveragingweights}, also known as isotropic merging, which simply consists in taking the uniform average of the models' parameters.
In~\cite{ilharco2022editing} the notion of task vector is introduced which is the shift in the parameter space from a pre-trained model to a fine-tuned one\footnote{We provide a more formal definition in~\autoref{sec:formalism}}. This concept has led to the framework of task arithmetic (TA) which is now extensively used~\cite{ortiz2023task} and has proven its efficiency across a wide range of applications. 
To enhance TA, SLERP (Spherical Linear Interpolation)~\cite{jang2024sphericallinearinterpolationtextanchoring} proposes a merging method that preserves certain geometric properties of the task vectors, thereby helping to mitigate interference as described earlier.
This notion of interference is also at the heart of a wide range of methods that attempt to address this problem by following a two-step process: first, task vectors are modified using techniques such as masking or singular value decomposition (SVD)~\cite{wang2024localizing, yadav2023tiesmergingresolvinginterferencemerging, stewart-siam-93}; second, the preprocessed task vectors are interpolated to produce the merged model. This is the case of methods such as TIES~\cite{yadav2023tiesmergingresolvinginterferencemerging}, AdaMerging++~\cite{yang2024adamergingadaptivemodelmerging}, or DARE~\cite{yu2024languagemodelssupermario}.
Among these methods, AdaMerging stands out as it is data-driven, {\it i.e.} this method automatically learns the best way to combine each model’s parameters. It requires a learning algorithm that uses data from the different tasks.
In this study, we propose a novel multi-task merging method rooted in task arithmetic and, like AdaMerging, leverages reference-free data-driven optimization.

\section{Formalism}
\label{sec:formalism}

\paragraph{Notations.} Random variables are denoted by capital letters ({\it e.g.}, $X$), their spaces by calligraphic letters ({\it e.g.}, $\mathcal{X}$), and elements by lowercase letters ({\it e.g.}, $x \in \mathcal{X}$). $\mathcal{P}(\mathcal{X})$ is the set of probability measures on $\mathcal{X}$, and $\mathcal{P}(\mathcal{Y}|\mathcal{X})$ the set of conditional probabilities on $\mathcal{Y}$ given $\mathcal{X}$. For $X \in \mathcal{X}$, $\mathbb{P}_X \in \mathcal{P}(\mathcal{X})$ is its law, and $\mathcal{S}_X \subset \mathcal{X}$ its support.
A task $t$ is a probability measure $\mathbb{P}_{X_t, Y_t} \in \mathcal{P}(\mathcal{X} \times \mathcal{Y})$, following the formalism of~\cite{fosse2025statistical}. For sake of simplicity, we hypothesis that all tasks share the same space $\mathcal{X} \times \mathcal{Y}$, and that $\mathcal{X}=\mathcal{Y}$, which is true in most generative tasks: both inputs and outputs are texts.
We note a language model with parameters $\theta$ with $M(\cdot|\cdot\,;\theta) \in \mathcal{P}(\mathcal{Y}|\mathcal{X})$. For task $t$, the specialized model on $t$ is $M(\cdot|\cdot\,;\theta_t)$ or $M_t(\cdot|\cdot)$.

\begin{rem}\label{rem:lm-prob}
    We identify a language model with a conditional probability: $M(\cdot|x;\theta)$ is a probability distribution over texts. For sake of simplicity we will refer to a language model by $M(x;\theta)$ when the model is given the data $x$ as input. In other words a language model is a communication channel.
\end{rem}

\subsection{Task vectors and model merging}

Task vectors have become essential objects in modern machine learning, given the vast number of fine-tuned models available on collaborative platforms such as HuggingFace~\cite{wolf2019huggingface}. These objects were first defined in~\cite{ilharco2022editing} as follows: if we denote by $\theta_0 \in \mathbb{R}^d$ the parameters of a pre-trained model ($d$ being the number of parameters in the model), and by $\theta_t \in \mathbb{R}^d$ the parameters of the same model after fine-tuning on a task $t$, the task vector is given by,
\begin{equation}
    \tau_t \triangleq \theta_t - \theta_0 \in \mathbb{R}^{d}.
    \label{eq:task-vector}
\end{equation}
The main approach in model merging is to combine such task vectors to create a new one that has new properties. In the following, we denote any task vector-based merging algorithm as,
\begin{equation}
    f\left(\theta_0, \set{\tau_t}, \Gamma \right) \in \mathbb{R}^d,
    \label{eq:mm}
\end{equation}
where $\theta_0$ are the pre-trained model parameters, $\set{\tau_t}$ is the set of task vectors to be merged, and $\Gamma$ is the set of parameters for the merging method. For example, task arithmetic~\cite{ilharco2022editing,ortiz2023task} (TA) can be formulated as,
\begin{equation}
    f\left(\theta_0, \set{\tau_t}, \Gamma \right) = \theta_0 + \sum_t \Gamma_t \times \tau_t,
    \label{eq:ta}
\end{equation}
where $\Gamma_t$ are real values (possibly negative). In the following we will refer to the TA method as $\Phi_n^\Gamma$, with $n$ the number of task vectors, and $\Gamma$ the merging coefficients. Estimating the parameters $\Gamma$ can be challenging, computationally intensive, and it strongly depends on the objective we aim to achieve with model merging ({\it e.g.}, multi-task learning, modularity, task unlearning, {\it etc.}). In this study, we focus on the {\bf multi-task setup} in model merging: the merged model must be capable of solving all the tasks on which the individual component models have been fine-tuned. In this set-up, as stated in~\cite{ortiz2023task}, one of the main properties we hope to achieve is {\bf weight disentanglement}, which is defined in~\autoref{def:weigth-dis}.

\begin{defn}[Weight Disentanglement~\cite{ortiz2023task}]\label{def:weigth-dis}
    Let $M(.;\theta)$ be a model parametrized by $\theta$. 
    Consider a set of tasks, $\set{(X_t, Y_t),~t \in \Trond}$, and their corresponding task vectors $\set{\tau_t,~ t\in\Trond}$ relatively to the model $M$. If tasks have non overlapping supports {\it i.e.} $\Srond_{X_t} \cap \Srond_{X_{t^\prime}} = \emptyset$ for all $t\neq t^{\prime}$, we say that a merging method $f$ satisfies weight disentanglement iff,
    \[
    \resizebox{\columnwidth}{!}{$
    M\left(x; f\left(\theta_0, \{\tau_t\}, \Gamma \right) \right) = 
    \begin{cases}
    M(x; \theta_0 + \tau_t) & \text{if } x \in \Srond_{X_t}, \\
    M(x; \theta_0) & \text{if } x \notin \bigcup_{t=1}^T \Srond_{X_t}.
    \end{cases}
    $}
\]
\end{defn}

In other words, a merging method satisfies~\autoref{def:weigth-dis} if adding $\tau_t$ does not affect model's output outside the corresponding task support $\Srond_{X_t}$. A merging method satisfying weight disentanglement assures that the performance on all the merged tasks will be preserved, assuring thus good behaviour in terms of multi-tasking. 
In this study we will focus on TA merging algorithm denoted as $\Phi_n^\Gamma$. We choose this method because, as explained in~\autoref{app:everything-is-ta}, most current model merging approaches are based on task arithmetic. However, as we will demonstrate, our theoretical framework is not limited to task arithmetic, and most of our results can be extended to more general methods.

\subsection{Our Method}

Based on the Kullback and Leibler ($\mathrm{KL}$)~\cite{kullback1951information} and Jensen-Shannon ($\mathrm{JS}$)~\cite{wong1985entropy} divergences, we propose a method to automatically estimate the merging coefficients in~\autoref{eq:ta} in a way that forces weight disentanglement ({\it c.f.}~\autoref{def:weigth-dis}), which translates into preserving performance (as much as possible) on the different task components. 
First, we recall some basic properties of the $\mathrm{KL}$ divergence. Given two discrete probability distributions $\mu$ and $\nu$ over some discrete space $\Irond$ we have,
\[
    \mathrm{KL}(\mu \| \nu) \triangleq \sum_{i \in \Irond} \mu(i) \log\left(\frac{\mu(i)}{\nu(i)}\right).
\]
In this definition, we refer to $\mu$ as the \textbf{reference} distribution. From this, we also define the $\mathrm{JS}$ divergence, which is a symmetric version of the $\mathrm{KL}$ divergence,
\[
\resizebox{\columnwidth}{!}{$
    \mathrm{JS}(\mu, \nu) \triangleq \frac{1}{2} \left(
        \mathrm{KL}\left(\mu \,\middle\|\, \frac{\mu + \nu}{2}\right) 
        + \mathrm{KL}\left(\nu \,\middle\|\, \frac{\mu + \nu}{2}\right)
    \right).
$}
\]
Both divergences are positive real numbers that quantify a notion of distance between the measures $\mu$ and $\nu$ (lower values indicate closer distributions).
We can extend these definitions to define divergence between language models.
Given two LMs, $M_1$ and $M_2$, and a reference input dataset $X$, we define the divergence $\mathrm{D}$ (either $\mathrm{KL}$ or $\mathrm{JS}$) between these models as,
\[
    \mathrm{D}_X(M_1 \| M_2) \triangleq \mathbb{E}_{X}\left[\mathrm{D}\big(M_1(\cdot \mid X) \| M_2(\cdot \mid X)\big)\right].
\]
This is one possible expression for the divergence between transition probabilities. Since, in NLP tasks, the model generates sequences of text, we can naturally extend the formula above to sequence-level distributions. See~\autoref{app:divergence_sequence_case} for more details. 
Since this work focuses on task vectors and model parameters, in the following, we will denote a language model (LM) by its parameters, {\it i.e.} $\theta_t \equiv M_t$. Given a divergence $\mathrm{D}$ (either $\mathrm{KL}$ or $\mathrm{JS}$), we provide in~\autoref{eq:meth-obj} an optimization problem to automatically find coefficients in task arithmetic.

\begin{equation} \label{eq:meth-obj}
    \Gamma^* \triangleq \underset{\Gamma}{\arg\min}~ \sum_{t=1}^n \mathrm{D}_{X_t}\left(\theta_t \,\|\, \Phi_n^\Gamma\right).
\end{equation}
    
\begin{rem} \label{rem:centroid}
    \cite{nielsen2020generalization} provides intuition for the solution of~\autoref{eq:meth-obj}. In the case where $\mathrm{D} = \mathrm{JS}$, the solution corresponds to the probabilistic centroid of the different output distributions generated by the models $\{\theta_t\}$, which is a concept firstly introduced in the framework of Information Geometry~\cite{amari2000methods}. In~\autoref{app:everything-is-ta} we give more details about properties of some centroids in the case of model merging.
\end{rem}

\begin{rem}\label{rem:hp-est}
    \autoref{eq:meth-obj} is defined for the task arithmetic merging function. However, we can replace $\Phi_n^\Gamma$ with any merging method $f(\theta_0, \{\tau_t\}, \Gamma)$ (not only task arithmetic).
\end{rem}

Despite the simplicity of the optimisation problem in~\autoref{eq:meth-obj}, we propose several results that connect with weight disentanglement and multi-task learning theory.

\begin{prop}\label{prop:0-loss}
    For either $D_X = \mathrm{KL}$ or $D_X = \mathrm{JS}$, the objective function defined in~\autoref{eq:meth-obj} has a minimum value of $0$ if and only if the merging method satisfies weight disentanglement around $\theta_0$ on the merged tasks.
\end{prop}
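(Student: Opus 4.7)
The plan is to exploit two standard facts about the divergences in play: (i) both $\mathrm{KL}$ and $\mathrm{JS}$ are nonnegative, and (ii) they vanish if and only if the two argument distributions coincide (for $\mathrm{KL}$, on the support of the reference; for $\mathrm{JS}$ this is symmetric). Since the objective in~\autoref{eq:meth-obj} is a sum of nonnegative terms, each of which is an expectation of a nonnegative quantity under $\mathbb{P}_{X_t}$, the whole sum equals $0$ if and only if every pointwise divergence vanishes for $\mathbb{P}_{X_t}$-almost every $x$, for every $t$. The goal is to translate this pointwise equality of conditional distributions into the first branch of~\autoref{def:weigth-dis}.

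For the easy direction ($\Leftarrow$), I assume the merging method satisfies weight disentanglement. Then for every $t$ and every $x \in \Srond_{X_t}$, the definition gives $M(\cdot\,|\,x; \Phi_n^\Gamma) = M(\cdot\,|\,x; \theta_0 + \tau_t) = M(\cdot\,|\,x; \theta_t)$, so the inner divergence $\mathrm{D}(M_t(\cdot|x) \,\|\, M(\cdot|x; \Phi_n^\Gamma))$ is zero. Taking expectation under $X_t$, whose support is $\Srond_{X_t}$, yields $\mathrm{D}_{X_t}(\theta_t \| \Phi_n^\Gamma) = 0$; summing over $t$ concludes.

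For the harder direction ($\Rightarrow$), suppose $\sum_{t=1}^n \mathrm{D}_{X_t}(\theta_t \| \Phi_n^\Gamma) = 0$. By nonnegativity each summand is zero, and since the integrand is nonnegative I conclude $\mathrm{D}(M_t(\cdot|x) \,\|\, M(\cdot|x; \Phi_n^\Gamma)) = 0$ for $\mathbb{P}_{X_t}$-a.e.\ $x$. Using the identification of distributions property of $\mathrm{KL}$ and $\mathrm{JS}$, this forces $M(\cdot\,|\,x; \Phi_n^\Gamma) = M(\cdot\,|\,x; \theta_t)$ for all such $x$, which is exactly the first case of~\autoref{def:weigth-dis}, restricted to the supports $\Srond_{X_t}$. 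I would then invoke the disjointness hypothesis $\Srond_{X_t} \cap \Srond_{X_{t'}} = \emptyset$ to check that these conditions are consistent across different $t$: for each $x$ lying in at most one support, there is a unique prescribed output, so no contradiction arises.

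The main obstacle is conceptual rather than computational: the objective~\autoref{eq:meth-obj} only constrains the merged model on $\bigcup_t \Srond_{X_t}$, whereas~\autoref{def:weigth-dis} also prescribes the behavior on the complement (namely, that the model coincides with $M(\cdot; \theta_0)$ there). This is precisely why the proposition is phrased as weight disentanglement \emph{around $\theta_0$ on the merged tasks}: the claim is about the nontrivial on-support branch, and the off-support branch is not (and cannot be) tested by the data-driven loss. I would make this restriction explicit at the start of the proof to avoid any ambiguity, and then the two implications above close the argument.
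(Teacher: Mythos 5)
Your proposal is correct and follows essentially the same route as the paper's proof: nonnegativity reduces the vanishing of the sum to the vanishing of each term, and the identification property of $\mathrm{KL}$/$\mathrm{JS}$ converts this into equality of the conditional output measures on each $\Srond_{X_t}$, which is the on-support branch of~\autoref{def:weigth-dis}. Your explicit remark that the loss cannot constrain the off-support branch (hence the qualifier ``on the merged tasks'') is a point the paper's proof leaves implicit, but it does not change the argument.
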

    
The proof of this result is given in~\autoref{app:subsec:proof-0-loss}. Therefore, the optimization problem in~\autoref{eq:meth-obj} admits a minimum value of $0$ iff there exists a choice of merging parameters $\Gamma$ that respects the weight disentanglement property, giving added weight to this property.
Furthermore, the following proposition establishes an interesting link between~\autoref{eq:meth-obj} and classical multi-task learning setups ({\it i.e.}, when datasets are merged before fine-tuning), further emphasizing the relevance of the objective function defined in~\autoref{eq:meth-obj}.

\begin{prop}\label{prop:meth-eq-mt}
    For $\mathrm{D} = \mathrm{KL}$, the optimization problem defined in~\autoref{eq:meth-obj} is a Moment projection ($\mathrm{M}$-projection) approximation of the multi-task objective when tasks are merged before training.
\end{prop}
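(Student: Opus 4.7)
The plan is to reduce the classical multi-task learning objective to~\autoref{eq:meth-obj} by one standard approximation step. First, I would write the multi-task objective obtained by merging datasets before fine-tuning as maximum likelihood over the pooled data, i.e.\ $\theta^*_{\text{MT}} = \arg\min_\theta -\sum_{t=1}^n \mathbb{E}_{(X_t, Y_t)}[\log M(Y_t|X_t;\theta)]$. Using the standard decomposition of cross-entropy into entropy plus $\mathrm{KL}$, and dropping the $\theta$-independent entropy term, this rewrites as $\theta^*_{\text{MT}} = \arg\min_\theta \sum_{t=1}^n \mathbb{E}_{X_t}\bigl[\mathrm{KL}\bigl(\mathbb{P}_{Y_t|X_t} \,\|\, M(\cdot|X_t;\theta)\bigr)\bigr]$. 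Each term has the fixed reference on the left and the model family on the right, which is by definition an $\mathrm{M}$-projection of the true task conditional onto the family $\{M(\cdot\,;\theta)\}$.

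Second, since the true conditional $\mathbb{P}_{Y_t|X_t}$ is inaccessible, I would replace it by the best available surrogate, namely the specialized fine-tuned model $M(\cdot|X_t;\theta_t)$. This is the defining approximation step of the $\mathrm{M}$-projection interpretation: it is justified by the standard fine-tuning assumption that $\theta_t$ approximates the Bayes-optimal conditional for task $t$, and the two objectives coincide exactly when each $\theta_t$ recovers $\mathbb{P}_{Y_t|X_t}$. The multi-task objective then becomes $\sum_{t=1}^n \mathbb{E}_{X_t}\bigl[\mathrm{KL}\bigl(M(\cdot|X_t;\theta_t) \,\|\, M(\cdot|X_t;\theta)\bigr)\bigr]$.

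Finally, I would restrict the parametrization to the task-arithmetic family by setting $\theta = \Phi_n^\Gamma$ and optimizing in $\Gamma$ only. The resulting problem is exactly~\autoref{eq:meth-obj} with $\mathrm{D} = \mathrm{KL}$, establishing that $\Gamma^*$ is precisely the $\mathrm{M}$-projection approximation of the multi-task optimum onto the task-arithmetic family.

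The main obstacle is making the surrogate step $\mathbb{P}_{Y_t|X_t} \approx M(\cdot|X_t;\theta_t)$ precise: one should quantify the residual gap $\mathbb{E}_{X_t}[\mathrm{KL}(\mathbb{P}_{Y_t|X_t} \,\|\, M(\cdot|X_t;\theta_t))]$, which is exactly the per-task fine-tuning error, and phrase the proposition as an equality of the two objectives up to this error (vanishing in the limit of perfectly fine-tuned specialists). Everything else is routine cross-entropy bookkeeping and reordering of sums and expectations.
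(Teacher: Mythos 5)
Your proposal is correct and follows essentially the same route as the paper's proof: both use the cross-entropy decomposition $\mathcal{H}(\mu,\nu)=H(\mu)+\mathrm{KL}(\mu\|\nu)$ to identify each fine-tuned $\theta_t$ as the $\mathrm{M}$-projection of $\mathbb{P}_{Y_t|X_t}$ onto the model family, then substitute $M(X_t;\theta_t)$ for the inaccessible true conditional to arrive at the objective of~\autoref{eq:meth-obj} with $\mathrm{D}=\mathrm{KL}$. Your closing point about quantifying the residual per-task fine-tuning error is exactly the caveat the paper states in the main text, namely that the quality of the approximation depends on the performance of the individual specialized models.
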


We give the proof of~\autoref{prop:meth-eq-mt} in~\autoref{app:subsec:mt-connex}. This proposition is noteworthy as it links our method to the classical multi-task learning objective considered so far as the strongest baseline. In other words our method is an approximation of the classical multi-task learning objective. Moreover, the demonstration of~\autoref{prop:meth-eq-mt} indicates that the effectiveness of this approximation depends on the performance of the individual fine-tuned models.
Finally, we give an illustration of our procedure for merging two models in~\autoref{fig:kl-merging}, and we provide the general algorithm for our approach in~\autoref{algo:meth}.

\begin{algorithm}[tb]
    {\small
    
    \caption{\small Model Merging via Divergence-Based Optimization. In this procedure, for a sequence $y$ and a model $M$, $\mathrm{Logits}(y, M)$ denotes the logits (soft probs) of the sequence $y$ given by the model $M$. From a technical point of view, this is only a forward pass of $y$ through $M$. In~\autoref{app:divergence_sequence_case} we propose more details about the computation of divergences.}
    \label{algo:meth}
    
    \begin{algorithmic}
        \Require : \\
        $\{X_t \mid t \in \mathcal{T}\}$ \\
        $f_\Gamma$ (Merging method) \\
        $\theta_0$ (Pre-trained model) \\
        $\{\theta_t \mid t \in \mathcal{T}\}$ (Fine-tuned models) \\
        \textcolor{charteGreen}{// Get logits of the data by generation}
        \For{$t \in \Trond$}
            \For{$x \sim X_t$}
                \State $\hat{y}_t^x \gets \{y^1, \ldots, y^{m}, \text{eos}\} \sim M(x;\theta_t)$
                \State $\ell(t, x) \gets \mathrm{Logits}(\hat{y}_t^x, M(x, \theta_t))$
            \EndFor
        \EndFor \\
        
        \textcolor{charteGreen}{// Compute coefficients}
        \State $\Gamma_i \gets \frac{1}{|\Trond|}\quad \forall i$ (Init. of coefficients)
        \For{each epoch}
            \For{$b\sim \cup_t X_t$} \textcolor{charteGreen}{// Batch sampling}
                \For{$x \in b$}
                    \State $\ell(f, x) \gets \mathrm{Logits}(\hat{y}_t^x, M(x, f_\Gamma))$
                \EndFor
                \State $L_\Gamma \gets 0$
                \For{each $t \in \mathcal{T}$}
                    \For{$x \in b$} 
                        \State \textcolor{charteGreen}{// Choose right task}
                        \State \textcolor{charteGreen}{// Compute pointwise Loss}
                        \State $L_\Gamma \gets L_\Gamma + \mathrm{D}(\ell(t,x)\| \ell(f,x))$
                    \EndFor
                \EndFor
                \State $\Gamma \gets \Gamma - \nabla_\Gamma L_\Gamma$ \textcolor{charteGreen}{// Gradient update}
            \EndFor
        \EndFor
        \State \Return $\Gamma$
    \end{algorithmic}
    }
\end{algorithm}

\begin{figure*}[tb]
    \centering
    \includegraphics[width=1\textwidth]{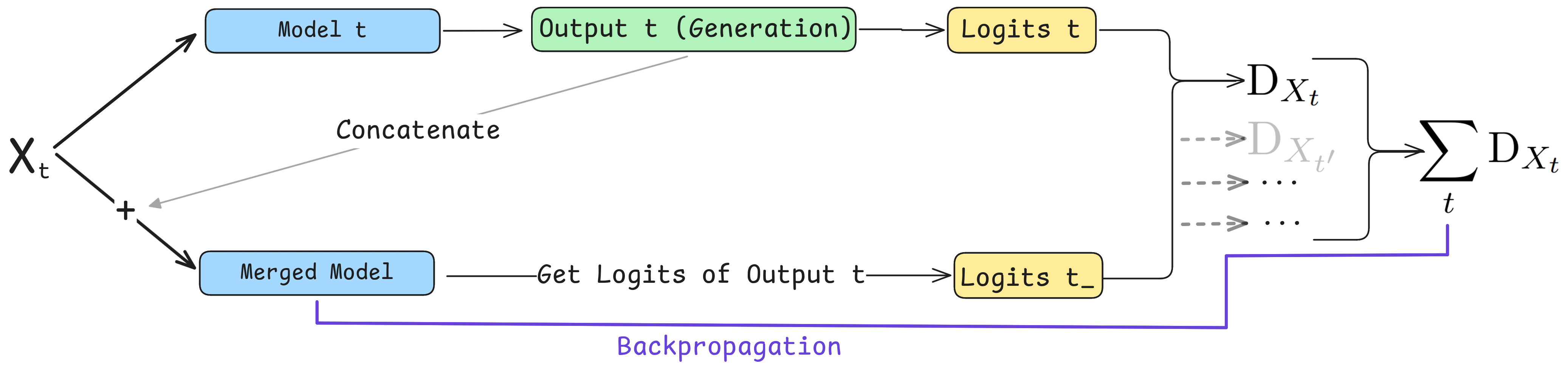}
    \caption{Illustration of the divergence-based model merging method. This figure shows the merging loss associated to one task to be merged, which is the task $t$. Our method consists in doing this procedure for every task $t$ and, as it is written on the left side, to sum all the associated loss. For more details, we refer to~\autoref{algo:meth}.}
    \label{fig:kl-merging}
\end{figure*}

\section{Experimental Protocol}
\label{sec:expe-proto}

\paragraph{Models and datasets.} 
As stated in~\autoref{sec:formalism}, our proposed merging method is designed for generative language models. Therefore, we evaluate our approach on two architectures: a decoder-only architecture using the \href{https://huggingface.co/Qwen/Qwen2.5-0.5B}{\tt Qwen2.5-0.5B}~\cite{qwen2.5} model and an encoder decoder architecture using the \href{https://huggingface.co/google-t5/t5-base}{\tt T5-Base}~\cite{raffel2020exploring} model. We apply our method on both classification and generative tasks.
For classification tasks, we used the GLUE benchmark~\cite{wang2019gluemultitaskbenchmarkanalysis}, as is common in recent studies. We fine-tuned {\tt Qwen2.5-0.5B} on 7 classification tasks, resulting in 7 distinct checkpoints for our experiments. Fine-tuning was performed using standard supervised fine-tuning (SFT), where the model generates the classification labels. An example of the fine-tuning data is shown in~\autoref{tab:classif-form}.
For generative tasks, we used the {\tt T5-Base} model and existing checkpoints on several tasks, including IMDB~\cite{imdb-data}, QASC~\cite{qasc-dataset}, SQuAD~\cite{squad-dataset}, and CommonGen~\cite{commongen-dataset}\footnote{All checkpoints are available \href{https://huggingface.co/mrm8488}{here}}. This setup allows us to evaluate our method in a more realistic scenario, where we do not control the fine-tuning process and only have access to the fine-tuned models. We demonstrate that our method is effective in this setting as well.

\begin{table}[tb]
    \centering
    {\small \begin{tabular}{l l}
        \toprule
        {\bf Description} & {\bf Data} \\
        \midrule
        \multirow{2}{*}{Prompt} & Is the following sentence linguistically \\
        & acceptable or unacceptable in english?\\
        \hline
        \multirow{2}{*}{Data} & Sodium is a little too peppy for me to want \\
        & to try mixing and water in a teacup. \\
        \hline
        \multirow{2}{*}{Labels} & Answer with acceptable or unacceptable.  \\
        & Answer: \\
        \hline
        \multirow{1}{*}{Answer} & unacceptable<|endoftext|> \\
        \bottomrule
    \end{tabular}}
    \caption{Data example for the fine-tuning on the GLUE Benchmark. This example is derived from the CoLA dataset. Fine-tuning is done in a completion only fashion (SFT) after the pattern {\it Answer:}}
    \label{tab:classif-form}
\end{table}

\paragraph{Evaluation Metrics.}
To assess the quality of a merging method, we define a metric to quantify how the merged model performs compared to each fine-tuned model. This metric is the Average Normalized Performance (ANP), defined as:
\begin{equation}
    \mathrm{ANP} \triangleq \frac{1}{n} \sum_{t=1}^{n} \frac{\mathrm{PERF}\left( f\left(\theta_0, \{\tau_i\}, \Gamma \right); t \right)}{\mathrm{PERF}(\theta_t; t)},
    \label{eq:perf}
\end{equation}
where $n$ is the number of merged tasks, $\mathrm{PERF}\left( f\left(\theta_0, \{\tau_i\}, \Gamma \right); t \right)$ is the performance of the merged model on task $t$ using the merging method $f$, and $\mathrm{PERF}(\theta_t; t)$ is the performance of the fine-tuned model on task $t$. In our work, $\mathrm{PERF}$ is measured by classical accuracy for classification tasks, and by the $\mathrm{ROUGE}_1$ score~\cite{lin-2004-rouge} for generation tasks.
Each performance metric is computed on a separate test set.
This metric quantifies the performance of a specific merging experiment. In practice, multiple merging experiments can be conducted (for example, by sampling different sets of $k$ tasks to merge). In such cases, we compute the $\mathrm{ANP}$ metric for each merging experiment and report the average. It is worth noting that the $\mathrm{ANP}$ metric is well suited for the multitasking setup, as it automatically provides a value relative to the baselines.\footnote{If the merging method verifies~\autoref{def:weigth-dis} $\Rightarrow$ $\mathrm{ANP} = 1$.}

\paragraph{Merging method.}
As stated previously, in this study we focus on the task arithmetic merging function $\Phi_n^\Gamma$. We define two versions of this merging method. The first, described in~\autoref{eq:ta}, is what we call {\bf Task Level} (TL) since one coefficient $\Gamma_i$ is given for each task. The granularity of the method can be further refined: since the models we use are deep neural networks organized in layers, we can define a different parameter $\Gamma_l^i$ for each layer for task $l$, resulting in $n \times L$ merging coefficients (where $L$ is the number of layers). We refer to this method as the {\bf Layer Level} (LL) approach. In the following, we present results for both methods and always specify which one is used.
Moreover, our method requires input data for each task; that is, when merging tasks $i$ and $j$, we need data from $X_i$ and $X_j$. For this purpose, we use the validation set of each task.

\paragraph{Baselines.}
Since we propose a new merging method, we compare our method with existing ones that are widely used such as model averaging~\cite{wortsman2022modelsoupsaveragingweights}, Multi-SLERP~\cite{goddard2024arcee}, TIES~\cite{yadav2023tiesmergingresolvinginterferencemerging}, and AdaMerging~\cite{yang2024adamergingadaptivemodelmerging} in its two variants: Task Level and Layer Level. For TIES, we followed the recommended recipe from~\cite{yadav2023tiesmergingresolvinginterferencemerging}. For Multi-SLERP, the weights associated to each tasks were set to $\frac{1}{n}$. For optimization-based methods (AdaMerging and ours), for each method we used the same hyper-parameters across all merging experiments, with a batch size of $4 \times n$ for each iteration. In~\autoref{tab:training-params} we provide more training details.

\section{Results}
\label{sec:results}

\subsection{Divergence justification} \label{sec:results:divergence-justification}

As stated in~\autoref{sec:formalism}, our method is based on computing divergences between the merged model and the various fine-tuned ones. Before applying this method, we decided to illustrate an interesting result: divergence and performance are well correlated.
Given a set of tasks $\Trond$, for each pair of tasks $(i,j)\in\Trond\times\Trond$, we compute $\mathrm{D}_{X_i}(\theta_i \| \theta_j)$ on a development set, and the performance of the model $\theta_j$ on task $i$ on a test set, denoted as $\mathrm{PERF}(\theta_j, i)$. Then, for all tasks $i$ we compute the correlation between $\set{-\mathrm{D}_{X_i}(\theta_i \| \theta_j),~\forall j\in\Trond}$ and $\set{\mathrm{PERF}(\theta_j, i),~\forall j\in\Trond}$. We report Spearman's correlations in~\autoref{tab:spearman_results} for classification tasks. We clearly observe that we obtain high correlations indicating that $\mathrm{KL}$ and $\mathrm{JS}$ are interesting proxies for performance: if $\mathrm{D}_{X_i}(\theta_i \| \theta_j)$ is low, it may suggest that $\mathrm{PERF}(\theta_j, i)$ will be high (without stating that this is a causation relation). We also observe that the $\mathrm{JS}$ divergence achieves the highest correlation; therefore, unless stated otherwise, we use the $\mathrm{JS}$ divergence criterion in our experiments. Although the $\mathrm{JS}$ divergence consistently outperforms $\mathrm{KL}$ divergence, the difference is not always significant.
Since our method involves minimizing $\sum_{i=1}^n\mathrm{D}_{X_i}(\theta_i \| \Phi_n^\Gamma)$, it thus can be viewed as aiming for a merging model that performs well on each task. This aligns directly with~\autoref{prop:meth-eq-mt}, which states that our method is an approximation of the classical multi-task learning objective. 
For more details, we refer to~\autoref{divergence_justification}.

\begin{table}[tb]
    \small
    \centering
    \begin{tabular}{lcc}
    \toprule
    $Task$ ($\theta_i$)  & $\mathrm{KL}$ & $\mathrm{JS}$ \\
    \midrule
     \textbf{CoLA}     & 0.812 & 0.925 \\
     \textbf{SST-2}    & 0.920 & 0.887 \\
     \textbf{QQP}      & 0.313 & 0.340 \\
     \textbf{QNLI}     & 0.774 & 0.796 \\
     \textbf{MNLI}     & 0.947 & 0.990 \\
     \textbf{RTE}      & 0.769 & 0.915 \\
     \textbf{MRPC}     & 0.877 & 0.875 \\
     \midrule
    \textbf{Avg.}  & \underline{0.773} & \textbf{0.818} \\
    \bottomrule
    \end{tabular}
    \caption{Spearman's correlation between $\set{-\mathrm{D}_{X_i}(\theta_i \| \theta_j),~\forall j\in\Trond}$ and $\set{\mathrm{PERF}(\theta_j, i),~\forall j\in\Trond}$ for all $i\in\Trond $ (The negation sign is added to have positive correlations). The "Avg." row reports the mean correlation across all tasks.}
\label{tab:spearman_results}
\end{table}

\subsection{Effective Model Merging}

We first illustrate our method in a pairwise model merging setup {\it i.e.} we merge only two tasks at a time and compute $\mathrm{ANP}$ as defined in~\autoref{eq:perf}. We computed this metric for every possible pairwise merging experiment and for each task types (classification and generation). For example, for classification tasks, we have 7 distinct tasks, meaning that we can perform $\binom{7}{2} = 21$ pairwise merging experiments and thus compute 21 distinct values of $\mathrm{ANP}$ (for generation tasks we have $\binom{4}{2} = 6$). \autoref{tab:merged_methods_avg} presents the average $\mathrm{ANP}$ metric across all pairwise merging experiment. 
We can clearly observe that our method achieves the best average performance.
Moreover we can see that the Layer Level variant outperforms the Task Level variant, which is expected since the former has a greater number of merging coefficients (one for each layer of each task), giving a higher degree of granularity.

\begin{table}[tb]
    \centering
        {\small \begin{tabular}{l c c}
        \toprule
        \textbf{Merging Method} & \textbf{Classif.} & \textbf{Gen.} \\
        \midrule
        Model Averaging        & 88.48 ($\pm$ 3.17) & 94.38 ($\pm$ 2.6) \\
        Multi-SLERP            & 91.54 ($\pm$ 2.98) & 76.39 ($\pm$ 21.04) \\
        TIES                   & 94.06 ($\pm$ 1.81) & 95.53 ($\pm$ 4.44) \\
        TL Adamerging  & 93.62 ($\pm$ 2.53) & 93.42 ($\pm$ 10.08) \\
        LL Adamerging & 94.06 ($\pm$ 2.95) & 83.20 ($\pm$ 9.94) \\
        \hline \hline
        TL $\mathrm{KL}$ (\textit{ours}) & 97.68 ($\pm$ 1.94) & 93.97 ($\pm$ 3.46) \\
        LL $\mathrm{KL}$ (\textit{ours}) & 99.16 ($\pm$ 0.50) & 97.50 ($\pm$ 1.73) \\
        \hline
        TL $\mathrm{JS}$ (\textit{ours}) & \underline{97.73} ($\pm$ 2.01) & \underline{97.29} ($\pm$ 1.94) \\
        LL $\mathrm{JS}$ (\textit{ours}) & \textbf{99.18} ($\pm$ 0.51) & \textbf{98.93} ($\pm$ 1.05) \\
        \bottomrule
        \end{tabular}}
    \caption{Average $\mathrm{ANP}$ (\%) for various merging methods across GLUE and T5 task pairs. The best results per benchmark are boldfaced, and the second best is underlined. See~\autoref{tab:merged_tasks_glue} and~\autoref{tab:merged_tasks_T5} in the appendix for more detailed results.}
\label{tab:merged_methods_avg}
\end{table}

\subsection{Robustness to Number of Tasks}

A major limitation of existing merging methods, as noted in~\cite{yadav2023tiesmergingresolvinginterferencemerging}, is their lack of robustness as the number of merged models increases.
We thus decided to empirically assess whether our method is robust to an increasing number of merged tasks by varying this number in our experiments and testing all possible combinations of tasks.
For example, in the classification setup, we have 7 different tasks. We test our merging method by merging between 2 and 7 tasks. Moreover, to ensure the reliability of our conclusions, we follow the same procedure as before: for each number of merged tasks, we perform all possible merging combinations. For instance, when merging three tasks, we consider all possible combinations, \textit{i.e.}, $\binom{7}{3} = 35$ merging experiments. We then compute the average $\mathrm{ANP}$ metric. In this analysis, we focus on the use of the $\mathrm{JS}$ divergence, as it demonstrated slightly better results.
\autoref{fig:multi-tasks} shows the average $\mathrm{ANP}$ metric across all possible merging experiments as a function of the number of tasks, for both the classification setup (\autoref{subfig:classif}) and the generation setup (\autoref{subfig:generation}).
First, we observe that regardless of the used method, increasing the number of tasks generally leads to a degradation in the average $\mathrm{ANP}$, which is consistent with the fact that task interference becomes more apparent.
We can also observe that, regardless of the number of merged tasks, our method (both task-level and layer-level) consistently provides better results, with curves that remain higher throughout the graph.
Moreover, the drop in performance as the number of tasks increases is less pronounced for our method, illustrating its robustness with respect to the number of tasks.

\begin{figure}[tb]
    \centering
    \begin{subfigure}{0.5\textwidth}
        \centering
        \includegraphics[width=\linewidth]{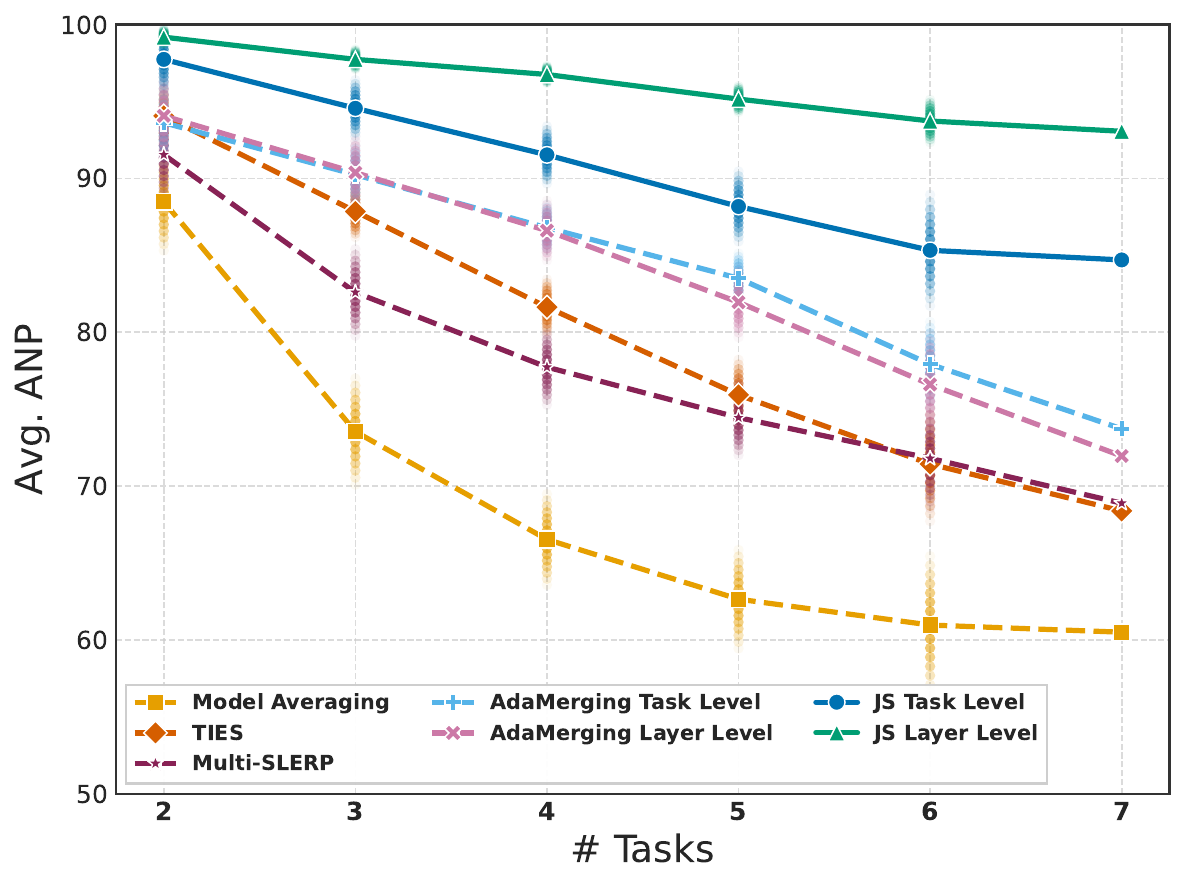}
        \caption{Classification \label{subfig:classif}}
    \end{subfigure}
    \begin{subfigure}{0.5\textwidth}
        \centering
        \includegraphics[width=\linewidth]{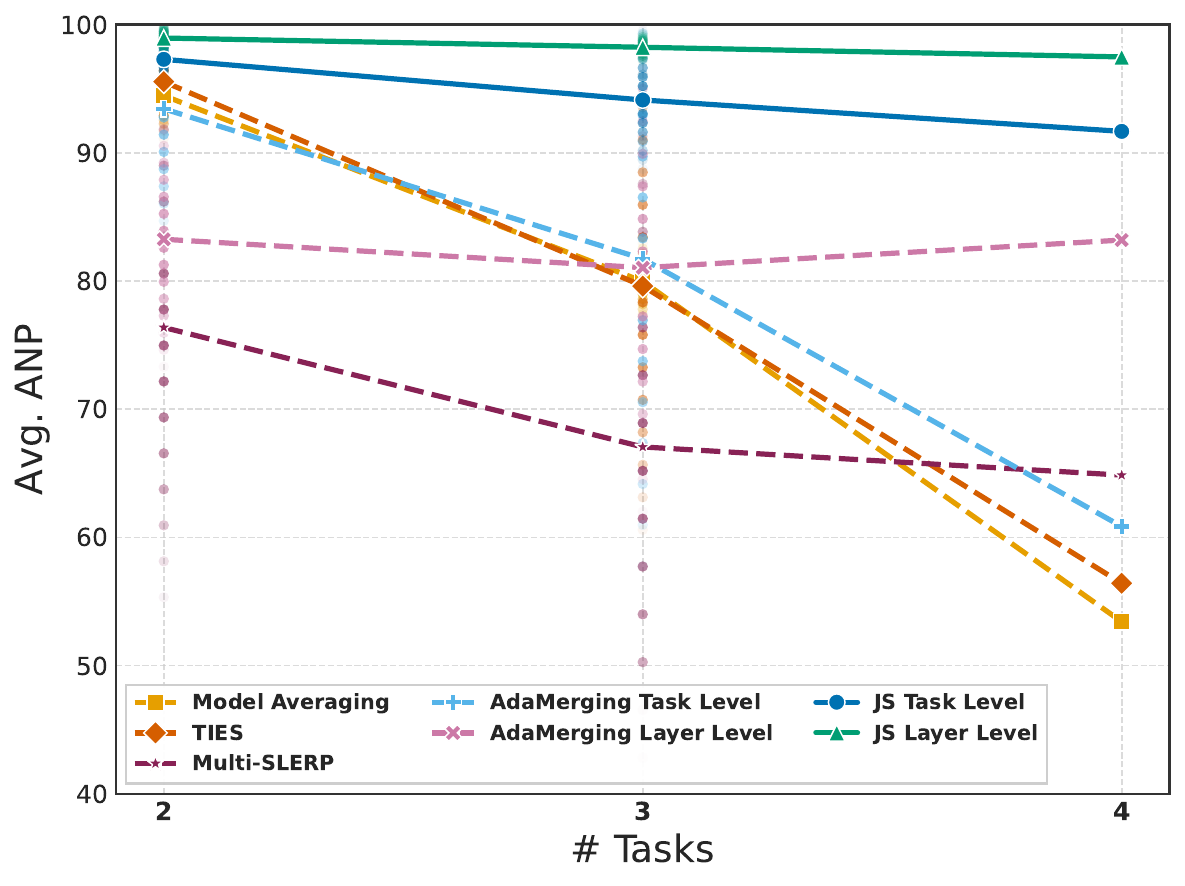}
        \caption{Generation \label{subfig:generation}}
    \end{subfigure}
    \caption{
        Evolution of the average $\mathrm{ANP}$ metric as a function of the number of merged tasks. For each number of tasks $k$ on the $x$-axis, several merging experiments were conducted ($\binom{n}{k}$ in total), and we report the 95\% confidence interval.
    }
    \label{fig:multi-tasks}
\end{figure}

In addition to the average $\mathrm{ANP}$, \autoref{fig:multi-tasks} presents confidence intervals (CI) across experiments for various tasks with a fixed number of tasks to merge. Our method demonstrates strong stability, both in classification and generation tasks. For generation tasks, some SOTA methods show large, overlapping confidence intervals, so we report CI margins in~\autoref{tab:merging_margins}. Notably, our method exhibits greater stability compared to Adamerging, another optimization-based approach.

\begin{table}[tb]
    \centering
    \small
    \renewcommand{\arraystretch}{1.2}
    \begin{tabular}{lcc}
    \toprule
    \textbf{Merging Method} & \textbf{2 Tasks} & \textbf{3 Tasks} \\
    \midrule
    Model Averaging          & $\pm$2.60 & $\pm$2.96 \\
    Multi-SLERP              & $\pm$21.04 & $\pm$27.97 \\
    TIES                     & $\pm$4.44 & $\pm$19.01 \\
    TL Adamerging    & $\pm$10.08 & $\pm$23.94 \\
    LL Adamerging   & $\pm$9.94 & $\pm$19.03 \\
    \hline \hline
    TL $\mathrm{JS}$ (ours)     & $\pm$1.94 & $\pm$5.37 \\
    LL $\mathrm{JS}$ (ours)    & $\pm$1.05 & $\pm$1.09 \\
    \bottomrule
    \end{tabular}%
    \caption{Confidence interval (CI) margins for different merging methods when merging 2 or 3 tasks. For each method, the margin is measured across multiple merging experiments performed on different task combinations.}
    \label{tab:merging_margins}
\end{table}

\subsection{Method Behaviour Analysis}

In this section, we analyse our method in greater depth by examining its convergence behaviour. We focus here exclusively on classification tasks.

\paragraph{Performance Convergence.}
Since our method is data-driven, it requires a training procedure, which we described in~\autoref{algo:meth}. We decided to further investigate its training dynamics by studying the evolution of the $\mathrm{ANP}$ metric over different training iterations. Here, we focus on pairwise merging, and \autoref{fig:acc_iter} shows the evolution of the $\mathrm{ANP}$ metric across training iterations for both the Task Level and Layer Level variants.
First we can notice once again that Task level and Layer level provide similar results which is in line with our previous findings.
Then, in all cases, the merging process converges smoothly without signs of over-fitting ({\it i.e.}, a sudden drop in performance), indicating that the proposed methods are effective and stable, consistently merging task-specific representations over training iterations.

\begin{figure}[tb]
    \centering
    \includegraphics[width=1.\linewidth]{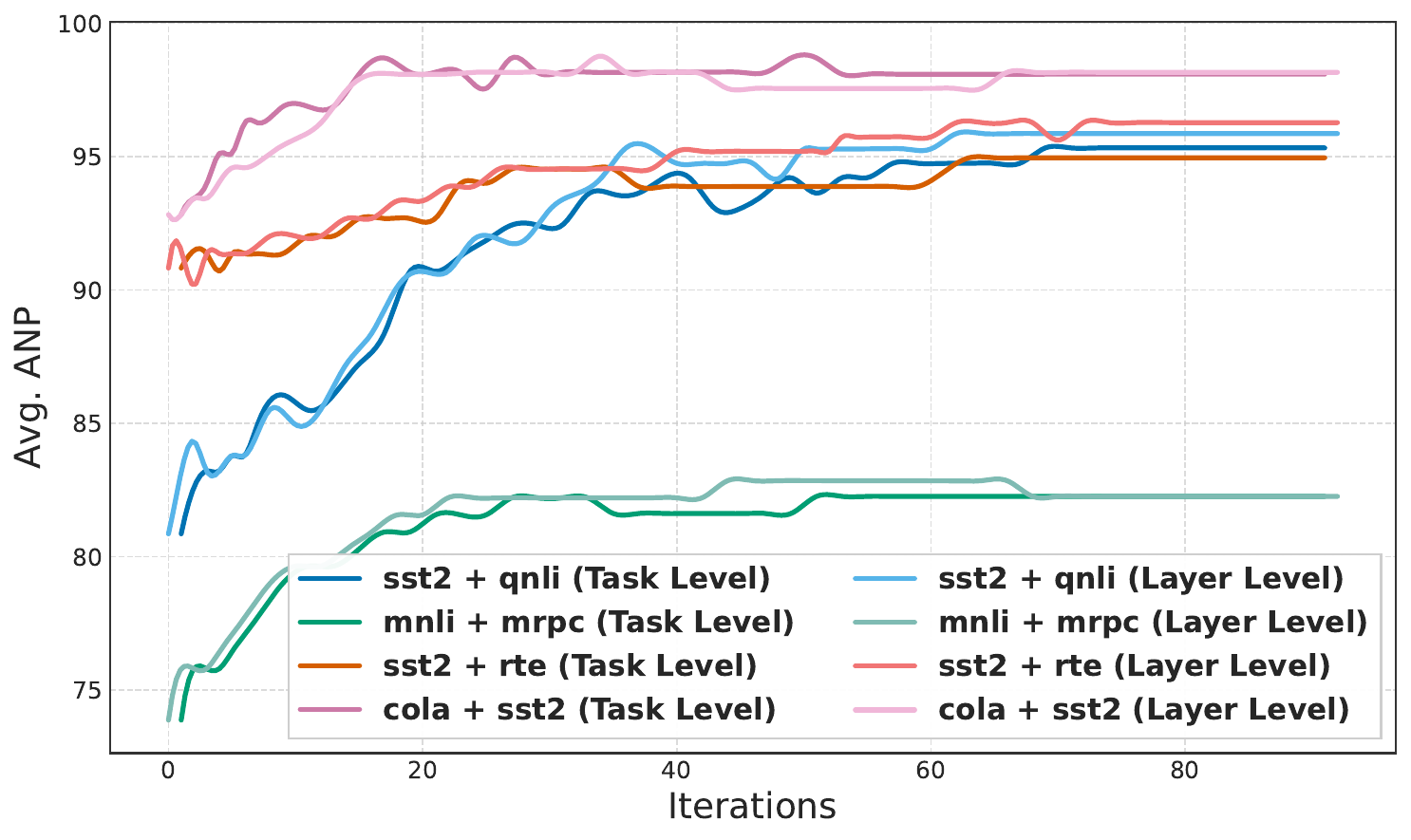}
    \caption{$\mathrm{ANP}$ metric of merged task pairs with Task Level and Layer Level JS Divergence as a function of training iterations.}
    \label{fig:acc_iter}
\end{figure}

\paragraph{Dataset Size Influence.}
As described in~\autoref{sec:formalism}, when merging a set of task vectors $\set{\tau_i}$, we require some data derived from $\set{X_i}$. A natural and important question is how much data is needed for our method to achieve strong performance. To investigate this, we studied the evolution of the $\mathrm{ANP}$ metric as a function of the amount of data used by our method. For this experiment, we focused on classification tasks and considered three merging scenarios: (CoLA, SST-2), (QNLI, MNLI), and (RTE, MRPC).
In~\autoref{fig:glue_dataset}, we plot the evolution of the $\mathrm{ANP}$ metric. We observe that our method outperforms state-of-the-art methods with as few as 25 samples, which corresponds to only 0.4\% of the training corpus used for fine-tuning the merged models and 5\% of the validation dataset.

\begin{figure}[tb]
    \centering
    \includegraphics[width=1.\linewidth]{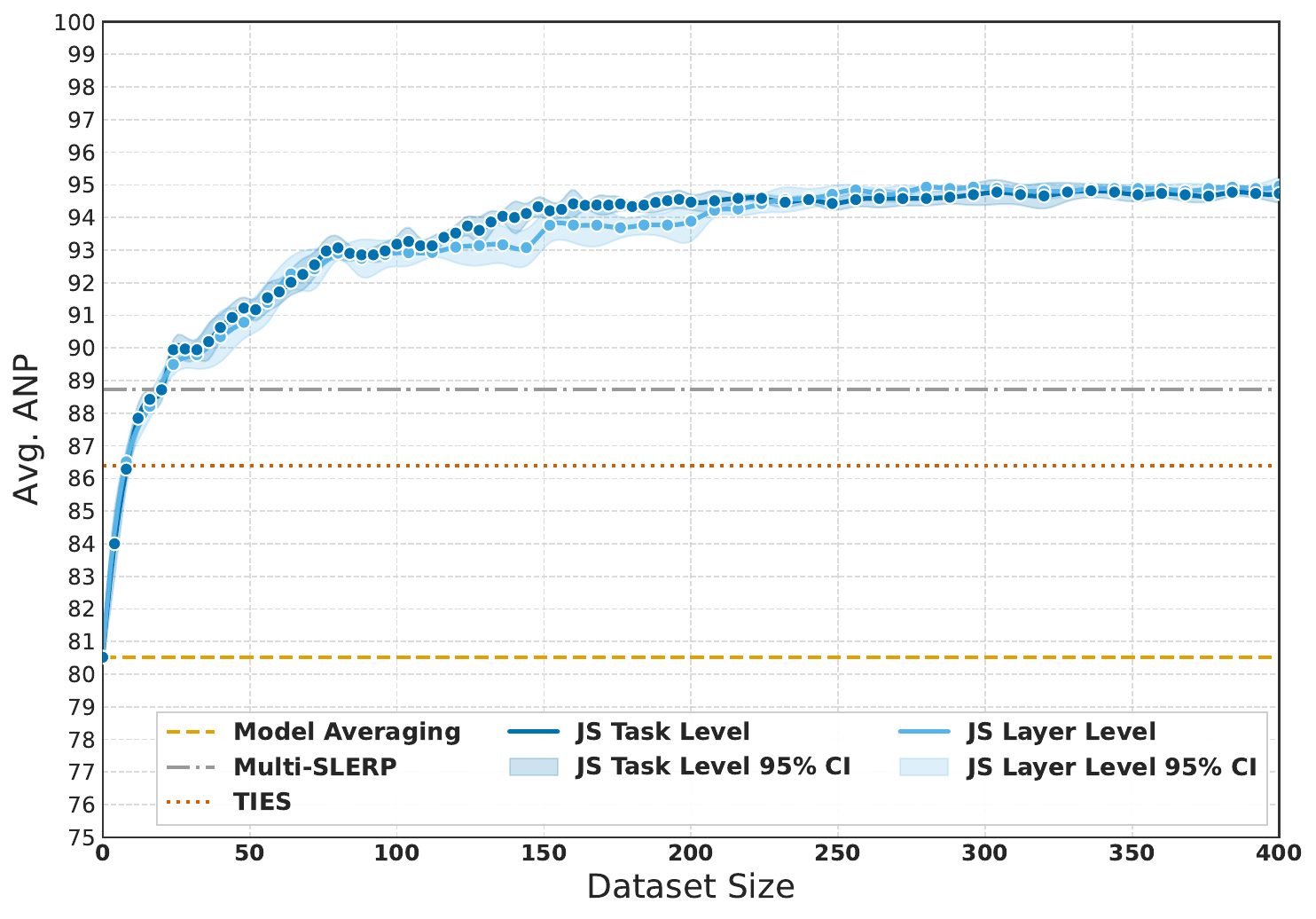}
    \caption{Impact of dataset size on the performance of our approach compared to data-free baselines. The average $\mathrm{ANP}$ is computed across three task pairs.}
    \label{fig:glue_dataset}
\end{figure}

\section{Conclusion}
\label{sec:conclusion}

In this work, we propose a new, data-driven but reference free, merging method which consists in finding the probabilistic centroid of fine-tuned models in order to produce a multi-task model.
After showing that, theoretically, our method is directly linked to multi-task learning and the concept of weight disentanglement, we demonstrate, empirically, that our method consistently outperforms most state-of-the-art methods on a pairwise model merging set-up.
Furthermore, we show that our method seems to better handle interference issues considering that it is the best one when the number of merged tasks increases.
Finally, we show that our method has high training stability and requires a relatively small amount of data to work.

\newpage

\section*{Limitations}
\label{sec:limitations}

Despite interesting results, our method present several limitations that we tempt to address here.

\paragraph{Other fine-tuning methods.} Our method has been extensively tested when the specialized models were constructed using full-finetuning. In this setup, the task vectors are sparse, and as a consequence, the interference problem is more limited. However, in low rank adaptation (LoRA)~\cite{hu2022lora} fine-tuning, task vectors ({\it i.e.}, LoRA matrices) affect the task arithmetic paradigm and are responsible for significant performance loss when merging. A limitation of our work is that we have not experimented within this constrained setup. 

\paragraph{Dataset influence.} Our method assumes that for each task $t$, we have access to a sample of the distribution $\Prob_{X_t}$ corresponding to the input data for task $t$. However, in some setups, we may not have access to such a distribution, but only to an approximation of it ($\Prob_{\tilde{X}_t}$). We have not addressed this case here. On the other hand, we propose an initial theoretical analysis of such a case in~\autoref{app:th-res:distribution-shift}. We believe that pushing in this direction will provide more robust results for model merging.

\newpage
\bibliography{custom,biblio}

\onecolumn
\newpage
\appendix

\section{Theoretical results}
\subsection{Proof of~\autoref{prop:0-loss}}\label{app:subsec:proof-0-loss}

\begin{proof}
    Since the divergences we use are non-negative, we have the following equivalence:
    \[
        \sum_{t=1}^n \mathrm{D}_{X_t}\left(\theta_t \,\|\, \Phi_n^\Gamma\right) = 0  
        \Leftrightarrow  
        \forall t,~\mathrm{D}_{X_t}\left(\theta_t \,\|\, \Phi_n^\Gamma\right) = 0.
    \]
    Moreover, by the properties of the $\mathrm{KL}$ and $\mathrm{JS}$ divergences, we have:
    \[
        \mathrm{D}_{X_t}\left(\theta_t \,\|\, \Phi_n^\Gamma\right) = 0 
        \Leftrightarrow 
        \forall x \in \mathcal{S}_{X_t},~M(x; \theta_t) = M(x; \Phi_n^\Gamma),
    \]
    where the last equality is understood in the sense of equality of measures. By transitivity, we obtain:
    \[
        \sum_{t=1}^n \mathrm{D}_{X_t}\left(\theta_t \,\|\, \Phi_n^\Gamma\right) = 0 
        \Leftrightarrow 
        \forall t,~\forall x \in \mathcal{S}_{X_t},~M(x; \theta_t) = M(x; \Phi_n^\Gamma),
    \]
    which concludes the proof.
\end{proof}

\begin{rem}
    In this demonstration, we stated that this was due thanks to some properties of the $\mathrm{KL}$ or $\mathrm{JS}$ divergence. However, we have the same result if we use any $f$-divergence, any divergence than can be expressed as following,
    \[
        \mathrm{D}_f(\mu \| \nu) = \int f\left(\frac{d\mu}{d\nu}\right) d\nu,
    \]
    which is of course the case of the Jensen Shannon and the Kullback ones. In fact this proof is valid for any divergence $\mathrm{D}$ which satisfies the following property,
    \[
        \mathrm{D}_f(\mu \| \nu) = 0 ~\Leftrightarrow~ \mu = \nu
    \]
\end{rem}

\subsection{Proof of~\autoref{prop:meth-eq-mt}}
\label{app:subsec:mt-connex}

\begin{defn}[Multi task objective]
    Let $\{(X_t, Y_t) \mid t \in \mathcal{T}\}$ be a set of tasks, $\mathcal{H}$ the cross-entropy loss function, and $M(\cdot; \theta)$ a model parameterized by $\theta$. We define the multi-task loss function as follows:
    \[
        \Lrond_{\textrm{MT}}(\theta) \triangleq \frac{1}{|\Trond|} \sum_{t \in \Trond} \Hrond \left( \Prob_{Y_t | X_t}, M(X_t; \theta) \right),
    \]
\end{defn}

\begin{lem}\label{lemma:cross-entropy}
    Let $(X_t, Y_t)$ be a task, $\mathcal{H}$ the cross-entropy loss function, and $M(\cdot;\theta)$ a model parameterized by $\theta$. Then, the following relation holds:
    \[
        \mathcal{H}(\mathbb{P}_{Y_t | X_t}, M(X_t; \theta)) = H(Y_t | X_t) + \mathrm{KL}(\mathbb{P}_{Y_t|X_t} \| M(X_t;\theta)),
    \]
    where $H$ denotes Shannon's entropy.
\end{lem}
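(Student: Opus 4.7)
The plan is to derive the identity through the classical decomposition ``cross-entropy $=$ entropy $+$ KL'', carefully handled in the conditional setting. First, I would unfold the definition of the (conditional) cross-entropy as
\[
    \mathcal{H}(\mathbb{P}_{Y_t \mid X_t}, M(X_t; \theta)) = -\,\mathbb{E}_{(X_t, Y_t)}\!\left[\log M(Y_t \mid X_t ; \theta)\right],
\]
using the convention from~\autoref{rem:lm-prob} that $M(X_t;\theta)$ denotes a conditional distribution over $\mathcal{Y}$ given the input $X_t$.

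Next, I would apply the elementary algebraic identity
\[
    -\log M(Y_t \mid X_t;\theta) \;=\; -\log \mathbb{P}(Y_t \mid X_t) \;+\; \log \frac{\mathbb{P}(Y_t \mid X_t)}{M(Y_t \mid X_t;\theta)},
\]
and take the expectation with respect to the joint law $\mathbb{P}_{X_t, Y_t}$. The first term on the right-hand side is precisely $-\mathbb{E}_{(X_t,Y_t)}[\log \mathbb{P}(Y_t \mid X_t)] = H(Y_t \mid X_t)$, by definition of Shannon's conditional entropy. For the second term, I would first condition on $X_t = x$ (so that the inner expectation over $Y_t$ becomes $\mathrm{KL}(\mathbb{P}_{Y_t \mid X_t = x} \,\|\, M(x;\theta))$), and then average over $X_t$, which yields $\mathrm{KL}(\mathbb{P}_{Y_t \mid X_t} \,\|\, M(X_t;\theta))$ in the sense used throughout the paper. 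Summing the two contributions gives exactly the claimed equality.

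The argument is essentially a tower-property calculation, so I do not expect any genuine obstacle. The only point requiring care is to ensure that the conditional entropy and the conditional KL are both understood as averages against the same marginal $\mathbb{P}_{X_t}$, which follows directly from linearity of expectation; if the paper's convention for $\mathrm{KL}(\mathbb{P}_{Y_t|X_t} \,\|\, M(X_t;\theta))$ differs slightly (e.g., pointwise in $x$), the statement remains valid after taking $\mathbb{E}_{X_t}$ on both sides.
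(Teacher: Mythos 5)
Your proof is correct: the pointwise split of $-\log M(Y_t\mid X_t;\theta)$ into $-\log \mathbb{P}(Y_t\mid X_t)$ plus the log-ratio, followed by the tower property, is exactly the standard ``cross-entropy $=$ entropy $+$ KL'' decomposition in the conditional setting. The paper itself states this lemma without proof, treating it as a classical identity, so your derivation simply supplies the omitted (and entirely standard) argument; your closing remark about making sure both the conditional entropy and the conditional KL are averaged against the same marginal $\mathbb{P}_{X_t}$ is the right point of care and matches the convention the paper uses for $\mathrm{D}_X$.
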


We now provide the proof of~\autoref{prop:meth-eq-mt}:

\begin{proof}
    By hypothesis, we have
    \[
        \theta_t = \underset{\theta}{\arg\min}~\mathcal{H}(\mathbb{P}_{Y_t|X_t}, M(X_t; \theta)).
    \]
    By~\autoref{lemma:cross-entropy}, this is equivalent to
    \[
        \theta_t = \underset{\theta}{\arg\min}~\mathrm{KL}(\mathbb{P}_{Y_t|X_t} \| M(X_t;\theta)).
    \]
    Thus, $M(X_t; \theta_t)$ is the moment projection ($\mathrm{M}$-projection)~\cite{csiszar1975divergence} of $\mathbb{P}_{Y_t | X_t}$ onto the set $\{M(X_t;\theta) \mid \theta \in \mathbb{R}^d\}$. Based on this, we define the $\mathrm{M}$-projection multi-task objective as follows:
    \[
        \mathcal{L}^{\mathrm{M}}_{\textrm{MT}}(\theta) \triangleq \frac{1}{|\mathcal{T}|} \sum_{t \in \mathcal{T}} \mathcal{H} \left( M(X_t; \theta_t), M(X_t; \theta) \right).
    \]
    Again, by~\autoref{lemma:cross-entropy}, we have
    \[
        \underset{\theta}{\arg\min}~ \mathcal{L}^{\mathrm{M}}_{\textrm{MT}}(\theta) = \underset{\theta}{\arg\min}~ \mathrm{KL} \left( M(X_t; \theta_t) \| M(X_t; \theta) \right).
    \]
    This concludes the proof.
\end{proof}

\subsection{Distribution shift} \label{app:th-res:distribution-shift}

The objective function we proposed in~\autoref{eq:meth-obj} supposed that for each task we have access to the input data distribution denoted as $\Prob_{X_t}$. However, in some cases we can have no access to $\Prob_{X_t}$ but to an approximation of it, denoted as $\Prob_{\tilde{X}_t}$. For example, we have a model trained on sentiment analysis and we do not have access the true data. We can thus use existing data for such task as an approximation. 
We show in the following that we can in fact control the behaviour of our method with respect to the quality of the approximation.

\begin{prop}\label{prop:uniform-conv}
    Considering a set of approximated distribution $\set{\Prob_{\tilde{X}_t}}$, for $\mathrm{D}=\mathrm{JS}$, our method will converge in a uniform way with $\set{\Prob_{\tilde{X}_t}}$.
\end{prop}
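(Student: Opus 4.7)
The plan is to view each summand in the objective of~\autoref{eq:meth-obj} as the expectation of a bounded functional of $x$, and exploit the pointwise boundedness of the Jensen--Shannon divergence to control the discrepancy between the "true" objective (using $\Prob_{X_t}$) and the "approximated" objective (using $\Prob_{\tilde{X}_t}$) by a quantity depending only on the distance between $\Prob_{X_t}$ and $\Prob_{\tilde{X}_t}$, not on $\Gamma$. Uniform convergence in $\Gamma$ then falls out directly.

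First, I would introduce, for each $t$ and each $\Gamma$, the map $g_{t,\Gamma}(x) \triangleq \mathrm{JS}\bigl(M(x;\theta_t)\,\|\,M(x;\Phi_n^\Gamma)\bigr)$. The crucial property of $\mathrm{JS}$ that I would invoke is its universal upper bound: $\mathrm{JS}(\mu \| \nu) \leq \log 2$ for any two probability measures $\mu, \nu$. Hence $0 \leq g_{t,\Gamma}(x) \leq \log 2$ holds pointwise in $x$ \emph{and} uniformly in $\Gamma$. This is what makes the $\mathrm{JS}$ case work cleanly, in contrast to what would happen with $\mathrm{KL}$ (the reason the proposition is stated for $\mathrm{JS}$ only).

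Second, defining the objective as a functional of the input distributions, $L(\Gamma; \{\Prob_t\}) \triangleq \sum_{t=1}^n \Esp_{X \sim \Prob_t}[g_{t,\Gamma}(X)]$, I would apply the standard bound for expectations of bounded functions under two laws, $|\Esp_\Prob[h] - \Esp_{\tilde{\Prob}}[h]| \leq 2 \|h\|_\infty \cdot \TV(\Prob,\tilde{\Prob})$, to each summand with $h = g_{t,\Gamma}$. Summing over $t$ yields
\begin{equation*}
    \bigl|L(\Gamma; \{\Prob_{X_t}\}) - L(\Gamma; \{\Prob_{\tilde{X}_t}\})\bigr| \;\leq\; 2 \log 2 \cdot \sum_{t=1}^n \TV(\Prob_{X_t}, \Prob_{\tilde{X}_t}).
\end{equation*}
The right-hand side is independent of $\Gamma$, which is exactly the definition of uniform convergence of the perturbed objective to the true one as $\TV(\Prob_{\tilde{X}_t}, \Prob_{X_t}) \to 0$ for every $t$. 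From there, a standard uniform-convergence-of-objectives argument (if the minimization is restricted to a compact domain of merging coefficients) gives that every accumulation point of approximate minimizers of the perturbed objective is a minimizer of the true one.

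The main obstacle will be handling the behaviour of the argmin outside a compact set: if $\Gamma$ is allowed to range over all of $\R^n$ (or $\R^{n \times L}$ in the layer-level variant), then uniform convergence of $L$ on its whole domain is enough for the objective values, but to transfer it to convergence of the minimizers one needs coercivity of $L(\,\cdot\,; \{\Prob_{X_t}\})$ in order to localize all argmins in a common compact set. A secondary subtlety concerns the sequence-level extension of~\autoref{app:divergence_sequence_case}: the bound $\log 2$ still holds token-wise, but to preserve it at the sequence level one should either work with a fixed maximum length or normalize per-length, which is a technical but non-substantive adjustment.
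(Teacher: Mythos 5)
Your proposal is correct and follows essentially the same route as the paper's proof: bound the pointwise Jensen--Shannon divergence by $\log 2$ uniformly in $\Gamma$, control the difference of expectations under $\Prob_{X_t}$ and $\Prob_{\tilde{X}_t}$ by $2\log 2 \cdot \mathrm{TV}(\Prob_{X_t},\Prob_{\tilde{X}_t})$ (the paper does this via Scheff\'e's theorem, which is the same bound you quote), and sum over $t$. Your added remarks on transferring uniform convergence of the objective to convergence of minimizers (coercivity, compactness) and on the sequence-level bound go beyond what the paper proves, but the core argument is identical.
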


\begin{proof}
    We recall that in~\autoref{eq:meth-obj} for a given task $t$ we have the following,
    \[
        \mathrm{D}_{X_t}\left(\theta_t \,\|\, \Phi_n^\Gamma\right) = \int_x \mathrm{D}\left(\theta_t(.|x) \,\|\, \Phi_n^\Gamma(.|x)\right) \Prob_{X_t}(dx).
    \]
    Then,
    \[
        \begin{split}
        \left|\mathrm{D}_{X_t}\left(\theta_t \,\|\, \Phi_n^\Gamma\right) - \mathrm{D}_{\tilde{X}_t}\left(\theta_t \,\|\, \Phi_n^\Gamma\right)\right| &= \left|\int_x \mathrm{D}\left(\theta_t(.|x) \,\|\, \Phi_n^\Gamma(.|x)\right) (\Prob_{X_t}(dx) - \Prob_{\tilde{X}_t}(dx))\right| \\
        &\leq \int_x \mathrm{D}\left(\theta_t(.|x) \,\|\, \Phi_n^\Gamma(.|x)\right) \left|\Prob_{X_t}(dx) - \Prob_{\tilde{X}_t}(dx)\right|
        \end{split}
    \]
    If we use the Jensen Shannon divergence we then have,
    \[
        \left|\mathrm{JS}_{X_t}\left(\theta_t \,\|\, \Phi_n^\Gamma\right) - \mathrm{JS}_{\tilde{X}_t}\left(\theta_t \,\|\, \Phi_n^\Gamma\right)\right| \leq \log(2) \int_x \left|\Prob_{X_t}(dx) - \Prob_{\tilde{X}_t}(dx)\right|
    \]
    Then by Scheffe's Theorem~\cite{scheffeUsefulConvergenceTheorem1947a}, we have:
    \[
        |\mathrm{JS}_{X_t}\left(\theta_t \,\|\, \Phi_n^\Gamma\right) - \mathrm{JS}_{\tilde{X}_t}\left(\theta_t \,\|\, \Phi_n^\Gamma\right)| \leq 2\log(2) \mathrm{TV}(\Prob_{X_t}, \Prob_{\tilde{X}_t}),
    \]
    where $\mathrm{TV}$ stands for total variation distance. Then we have,
    \[
        \left|\sum_t \left(\mathrm{D}_{X_t}\left(\theta_t \,\|\, \Phi_n^\Gamma\right) - \mathrm{D}_{\tilde{X}_t}\left(\theta_t \,\|\, \Phi_n^\Gamma\right)\right)\right| \leq 2 \log(2) \sum_t \mathrm{TV}(\Prob_{X_t}, \Prob_{\tilde{X}_t}),
    \]
    which concludes the proof.
\end{proof}

\begin{rem}
    In~\autoref{prop:uniform-conv} we state that the convergences is uniform in the sense that if the approximations we have converge uniformally to the true distribution {\it i.e.} in the sense of the total variation, then we have a convergence of our objective function.
\end{rem}

\section{Divergence Details}
\subsection{Divergence Between Models on Sequence Outputs} \label{app:divergence_sequence_case}

In this section, we give more details on the computation of a divergence $\mathrm{D}$ between autoregressive models. In this study, we recall that we defined the divergence between two LMs $M_1$ and $M_2$ as following,
\[
    \mathrm{D}_X(M_1 \| M_2) \triangleq \mathbb{E}_{X}\left[\mathrm{D}\big(M_1(\cdot \mid X) \| M_2(\cdot \mid X)\big)\right].
\]
%
%
Since a model prompted with input $x$ generates a sequence of symbols in an auto-regressive set-up, we propose here to detail more the way divergence is computed between models.
For each input $x$ in $X$, we generate the next tokens in a greedy manner using the reference model $M_1$, while storing the softened logits (probability distributions) of the generated tokens at each step, until the end-of-sequence (EOS) token is produced. We then append the generated token sequence $y$ to the original input $x$, forming the extended sequence $x + y$. Next, we forward propagate this extended sequence through the second model $M_2$ once, and obtain the probability distributions of the tokens generated by $M_1$. This allows us to compare the token-level distributions of $M_1$ and $M_2$ on the same generated sequence.
Using this procedure, we can measure the divergence $\mathrm{D}_X(M_1 \| M_2)$ over sequences generated by $M_1$. In a more formal way we propose to compute divergence in a recurrent. Let $x \in \Xspace$ be a sequence, $t \in \mathbb{N}$ be an index, and we suppose that we can sample
\[
    \mathrm{D}\big(M_1(\cdot \mid x, y_{<t}) \| M_2(\cdot \mid x,y_{<t})\big).
\]
Then the final divergence is given by,
\[
    \frac{1}{|\Xspace|} \sum_x \frac{1}{T_x} \sum_{t=1}^{T_x} \mathrm{D}\big(M_1(\cdot \mid x, y_{<t}) \| M_2(\cdot \mid x,y_{<t})\big),
\]
Where for each $x$, $T_x$ is the maximum number of token to generate before the end of sequence token (it can be viewed as some sort of stopping time). For greater accuracy, this calculation should be performed as follows,
\[
    \frac{1}{|\Xspace|} \sum_x \frac{1}{T_x} \sum_{t=1}^{T_x} \sum_{y_{<t} \sim M_1(\cdot \mid x)} \mathrm{D}\big(M_1(\cdot \mid x, y_{<t}) \| M_2(\cdot \mid x,y_{<t})\big),
\]
where the sum over $y_{<t} \sim M_1(\cdot \mid x)$, would correspond to a sampling procedure of sequences of size less than $t$ with respect to the model $M_1(\cdot \mid x)$. In our work, for sake of simplicity we stick to some greedy procedure.

\subsection{Correlation between Divergence Variants and Model Relatedness.} \label{divergence_justification}

In~\autoref{sec:results:divergence-justification}, we proposed an experiment to investigate links between the divergences we used and the notion of performance on the different tasks. 
In~\autoref{fig:kl_heatmap}, we propose the heat map defined by $\mathrm{D}_{X_i}(\theta_i \| \theta_j)$, and in~\autoref{tab:glu_accuracies}, we proposed the matrix of values $\mathrm{PERF}(\theta_j, i)$. Correlations computed in~\autoref{tab:spearman_results} in this study, correspond to correlations compute between rows of~\autoref{fig:kl_heatmap} and rows of~\autoref{tab:glu_accuracies}.

\begin{figure}[tb]
  \centering
  \begin{subfigure}[t]{0.48\textwidth}
    \centering
    \includegraphics[width=\linewidth]{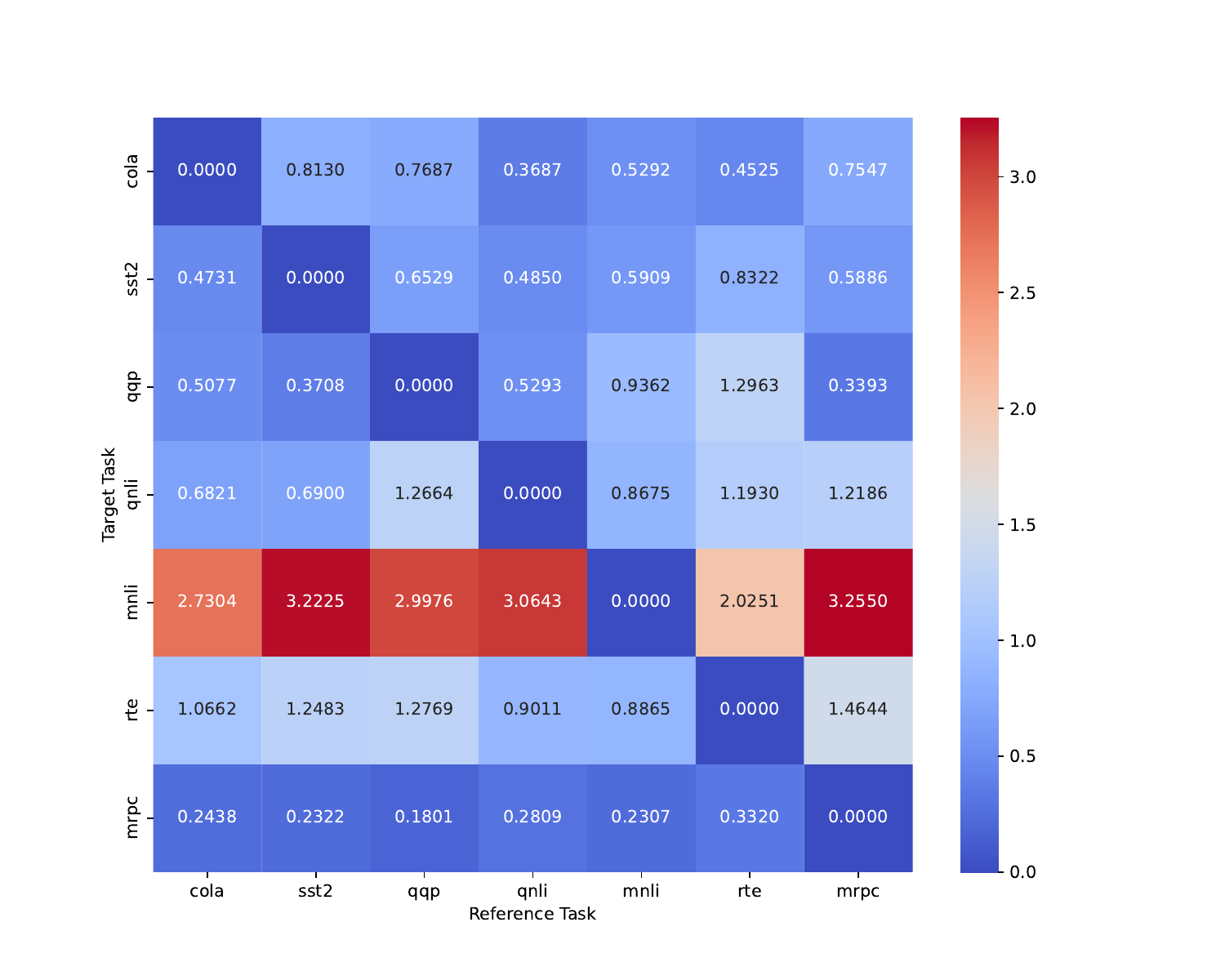}
    \caption{$\mathrm{KL}$}
    \label{fig:kl_heatmap_forward}
  \end{subfigure}
  \hspace{0.01\textwidth}
  \begin{subfigure}[t]{0.48\textwidth}
    \centering
    \includegraphics[width=\linewidth]{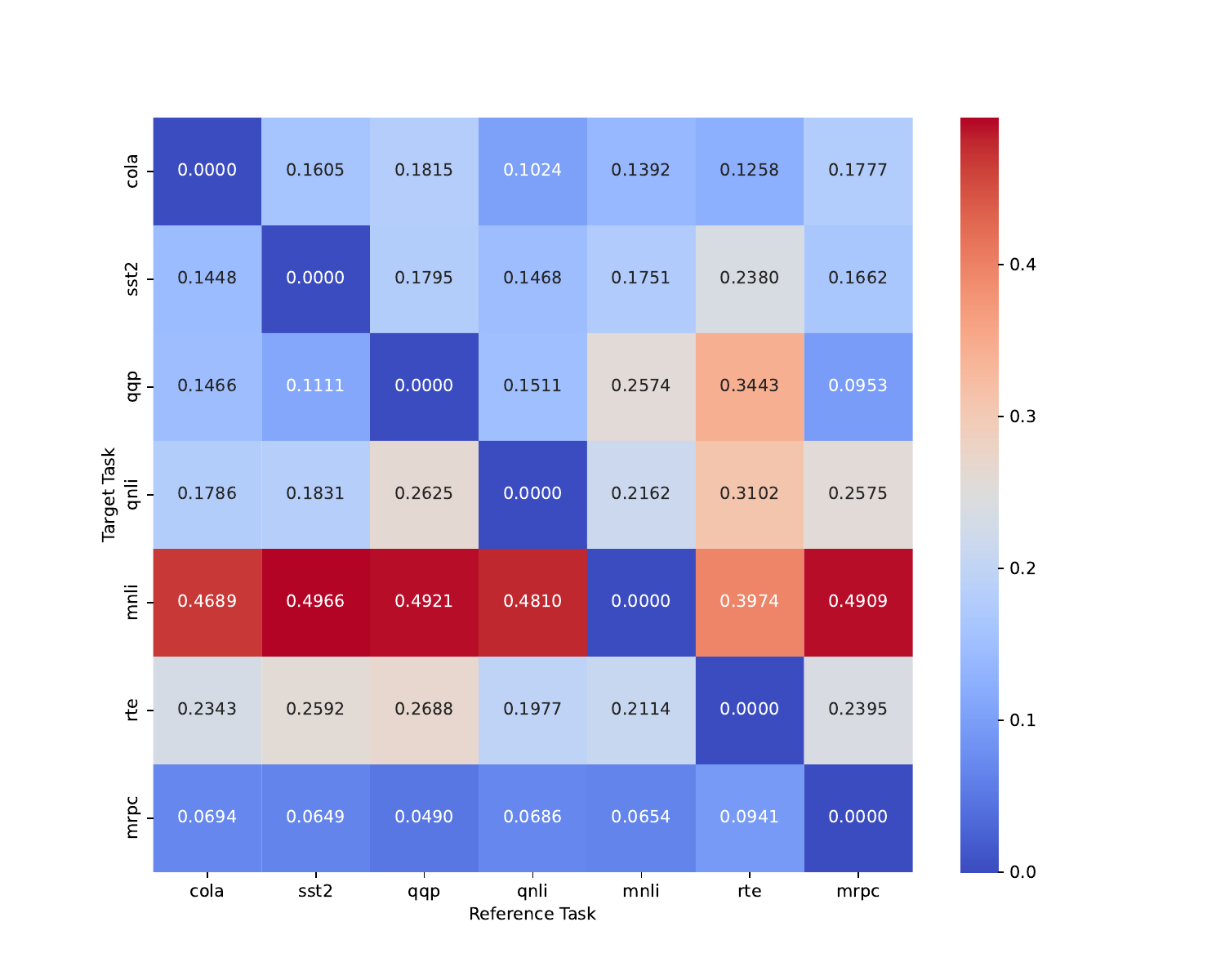}
    \caption{$\mathrm{JS}$}
    \label{fig:kl_heatmap_js}
  \end{subfigure}
  \caption{$\mathrm{D}_{X_i}(\theta_i \| \theta_j)$ values for different divergences. ($i$ corresponds to the row index, while $j$ corresponds to the column index.)}
  \label{fig:kl_heatmap}
\end{figure}

\begin{table}[tb]
\small
\centering
\begin{tabular}{lccccccc}
\toprule
Eval Dataset $\rightarrow$ & \textbf{CoLA} & \textbf{SST-2} & \textbf{QQP} & \textbf{QNLI} & \textbf{MNLI} & \textbf{RTE} & \textbf{MRPC} \\
Model $\downarrow$ & & & & & & & \\
\midrule
\textbf{CoLA} & \textbf{82.20} & 77.80 & 50.60 & 44.60 & 8.80  & 28.88 & 70.00 \\
\textbf{SST-2} & 44.00 & \textbf{92.80} & 71.20 & 37.00 & 10.20 & 26.71 & 66.75 \\
\textbf{QQP}  & 33.00 & 50.20 & \textbf{84.60} & 40.80 & 8.60  & 28.88 & 69.75 \\
\textbf{QNLI} & 46.20 & 76.40 & 39.20 & \textbf{86.40} & 9.20  & 43.68 & 69.00 \\
\textbf{MNLI} & 33.80 & 61.40 & 65.40 & 45.00 & \textbf{78.00} & 28.16 & 67.50 \\
\textbf{RTE}  & 34.80 & 48.20 & 63.60 & 50.20 & 14.00 & \textbf{75.81} & 69.00 \\
\textbf{MRPC} & 32.80 & 57.40 & 66.20 & 41.20 & 11.60 & 45.85 & \textbf{85.00} \\
\bottomrule
\end{tabular}
\caption{
Accuracies (\%) of each model checkpoint (rows) evaluated on the seven GLUE tasks (columns). Each row corresponds to a model fine-tuned on a specific task. The highest accuracy for each task is highlighted in bold, and corresponds each time to the specialized model.
}
\label{tab:glu_accuracies}
\end{table}

\section{Additional experiments}
\subsection{Task Vectors Cosine Similarities}

As is well known in the model merging literature, and more specifically within the task arithmetic framework, cosine similarities between task vectors are typically close to zero. This indicates that the tasks are sufficiently disentangled and can be effectively merged using task arithmetic methods. 
In \autoref{fig:cosine-similarity-matrices}, we present the cosine similarity matrices of the task vectors used in our experiments.

\begin{figure}[tb]
    \centering
    \includegraphics[width=1\linewidth]{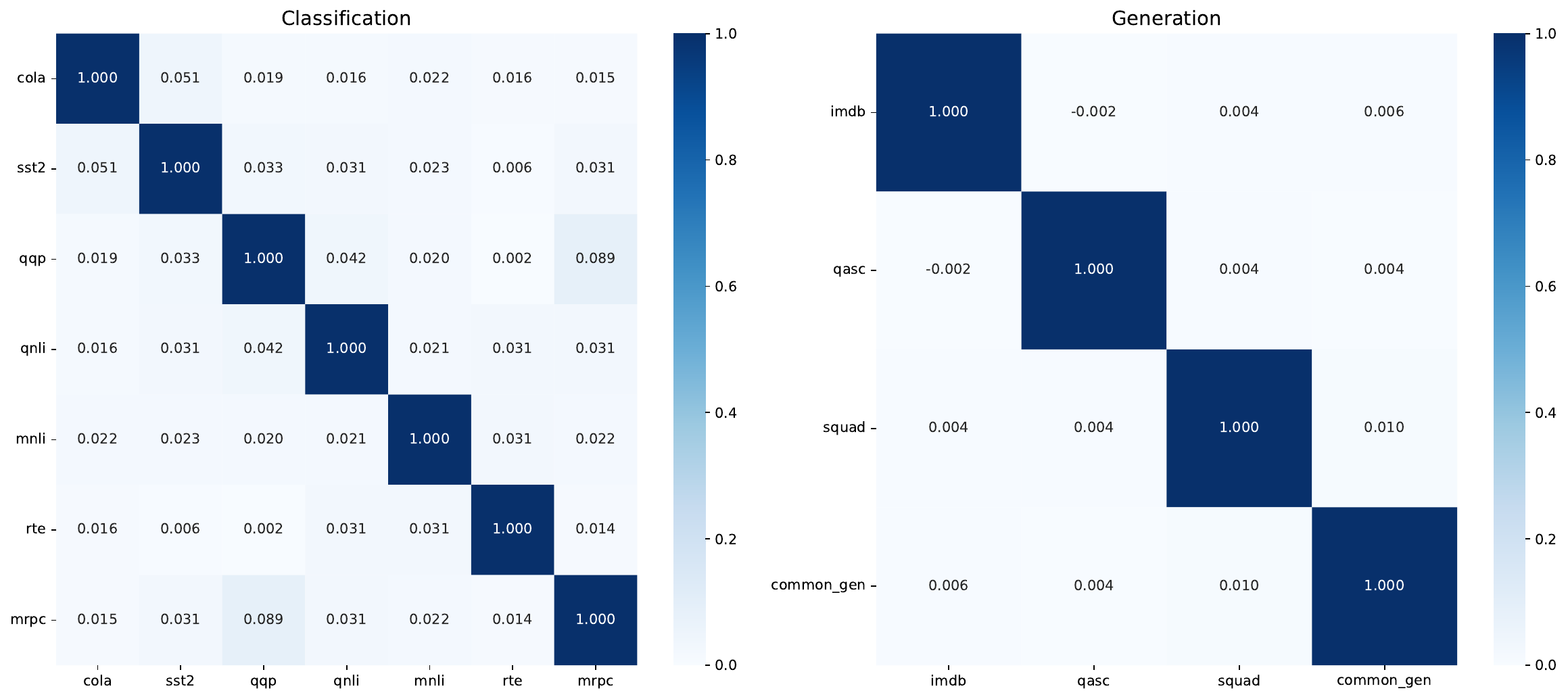}
    \caption{Cosine similarity matrices of task vectors between different tasks. Left: Similarity between GLUE benchmark tasks (CoLA, SST2, QQP, QNLI, MNLI, RTE, MRPC). Right: Similarity between diverse generative tasks tasks (IMDB, QASC, SQUAD, commonGen). Lower similarity values indicate greater orthogonality between task vectors, suggesting less interference when merging models fine-tuned on these tasks.}
    \label{fig:cosine-similarity-matrices}
\end{figure}

\subsection{Details on~\autoref{fig:multi-tasks}} \label{glue_res_T5}

On~\autoref{tab:merged_tasks_glue}, we propose the values that are plotted on~\autoref{subfig:classif}, and on~\autoref{tab:merged_tasks_T5} we propose the values that are plotted on~\autoref{subfig:generation}.

\begin{table}[tb]
\centering
\small
\resizebox{\textwidth}{!}{%
\renewcommand{\arraystretch}{1.2}
\begin{tabular}{c|ccc|ccc|ccc}
\toprule
\textbf{\# Tasks} & \textbf{Model Averaging} & \textbf{Multi-SLERP} & \textbf{TIES} & \multicolumn{3}{c}{\textbf{Task Level}} & \multicolumn{3}{c}{\textbf{Layer Level}} \\
\cmidrule(lr){5-7} \cmidrule(lr){8-10}
& & & & Adamerging & KL (ours) & JS (ours) & Adamerging & KL (ours) & JS (ours) \\
\midrule
2 & 93.89 & 94.20 & 96.02 & 91.16 & 98.18 & 98.28 & 92.41 & 98.85 & 98.85 \\
3 & 89.10 & 92.18 & 94.08 & 90.86 & 97.40 & 97.39 & 90.26 & 98.42 & 98.26 \\
4 & 75.12 & 79.31 & 79.37 & 78.73 & 80.12 & 79.21 & 78.47 & 95.20 & 95.19 \\
5 & 60.92 & 66.86 & 73.61 & 66.84 & 83.82 & 85.37 & 64.93 & 95.60 & 95.50 \\
6 & 56.98 & 70.48 & 67.23 & 67.97 & 83.45 & 84.45 & 62.85 & 93.11 & 93.42 \\
7 & 60.51 & 68.89 & 68.39 & 67.26 & 83.45 & 84.70 & 63.05 & 92.53 & 93.06 \\
\midrule
\textbf{Average} & 72.75 & 78.65 & 79.78 & 77.14 & 87.73 & 88.23 & 75.33 & 95.62 & 95.71 \\
\bottomrule
\end{tabular}%
}
\caption{$\mathrm{ANP}$ for merged tasks obtained via different merging methods. Values are normalized as percentages, with separate evaluations for KL and JS Divergence variants.}
\label{tab:merged_tasks_glue}
\end{table}

\begin{table}[tb]
\centering
\small
\resizebox{\textwidth}{!}{%
\renewcommand{\arraystretch}{1.2}
\begin{tabular}{c|ccc|ccc|ccc}
\toprule
\textbf{\# Tasks} & \textbf{Model Averaging} & \textbf{Multi-SLERP} & \textbf{TIES} & \multicolumn{3}{c}{\textbf{Task Level}} & \multicolumn{3}{c}{\textbf{Layer Level}} \\
\cmidrule(lr){5-7} \cmidrule(lr){8-10}
& & & & Adamerging & Forward (ours) & JS (ours) & Adamerging & Forward (ours) & JS (ours) \\
\midrule
2 & 97.92 & 98.96 & 98.43 & 99.74 & 96.34 & 98.96 & 100.00 & 98.96 & 99.48 \\
3 & 82.62 & 67.89 & 87.87 & 79.92 & 92.99 & 96.36 & 89.26 & 95.91 & 97.79 \\
4 & 53.38 & 64.85 & 56.42 & 60.88 & 87.52 & 91.75 & 83.15 & 94.97 & 97.44 \\
\midrule
\textbf{Average} & 77.97 & 77.23 & 80.91 & 80.18 & 92.28 & 95.69 & 90.80 & 96.61 & 98.24 \\
\bottomrule
\end{tabular}%
}
\caption{$\mathrm{ANP}$ for merged tasks obtained via different merging methods. Values are normalized as percentages, with separate evaluations for KL and JS Divergence variants.}
\label{tab:merged_tasks_T5}
\end{table}

\subsection{Parameter convergence}

In~\autoref{sec:results} we provided an analysis of the convergence of our method by displaying the evolution of our loss function through training iterations and we concluded that our method smoothly converges to an local optimum value. We decided to go further and analyse the evolution of the coefficients associated to each task. As a recall we used the framework of task arithmetic and in this framework the merged model is given by the following,
\[  
    \theta_0 + \sum_i \Gamma_i \times \tau_i,
\]
and we are here interested into the evolution of the coefficients $\Gamma_i$. In~\autoref{fig:alpha_beta_evolution}, we provide the evolution of $\Gamma_1$ (left) and $\Gamma_2$ (right) through training iterations, on different pairwise merging set-up on the benchmark GLUE. We can mainly observe that the dynamic of our method is also smooth in the coefficients $\Gamma_i$, with an interesting convergence of the parameters.
We can also go further by observing in some settings that the values of $\Gamma_1$ and $\Gamma_2$ seem to be independent meaning that when merging two tasks the merging coefficient associated to one task seems to strongly depend on the task itself and not the task with which we merge.
To better support this fact, we decided to add a visualization. In the framework of task arithmetic, each merging experiment can be represented by a point in an euclidean space defined by the following coordinates $\left(\Gamma_1, \Gamma_2, \dots, \Gamma_n\right)^t$. In the case of pairwise merging experiments, these points are in a plan and we decided to visualize this plan on~\autoref{fig:glue_tasks_coefficients}, for classification tasks, and~\autoref{fig:t5_tasks_coefficients} for generative tasks.
On these figures, we can mainly observe that we have different scenarios. For tasks such as QNLI, the factor associated with the QNLI task seems not to depend on the other tasks, while for some other tasks such as MRPC and CoLA we have another scenario where the value of the coefficient associated to the task seems to depend on the value associated to the other tasks.
This seems to be an interesting observation, to be considered alongside the fact that some tasks may be independent, while others may have a statistical dependency, i.e., completing one task may have a positive or negative impact on another.


\begin{figure*}[tb]
    \centering
    \includegraphics[width=1\linewidth,height=0.3\textheight,keepaspectratio]{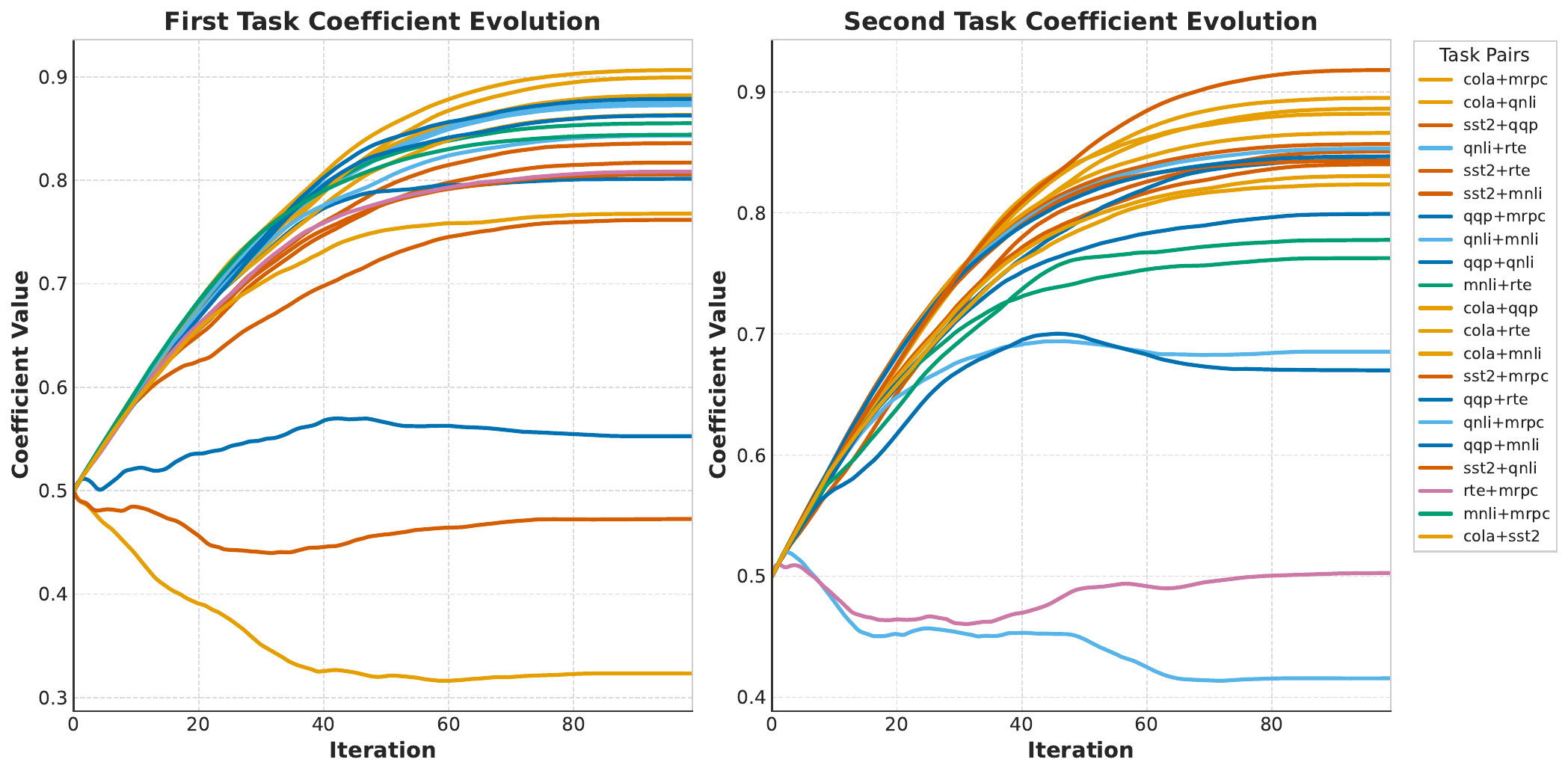}
    \caption{Evolution of the task coefficients across training iterations. The first graph shows the coefficient assigned to the first task in each task pair (as indicated in the legend), while the second graph shows the coefficient assigned to the second task.}
    \label{fig:alpha_beta_evolution}
\end{figure*}



\begin{figure}[tb]
    \centering
    \includegraphics[width=1\linewidth]{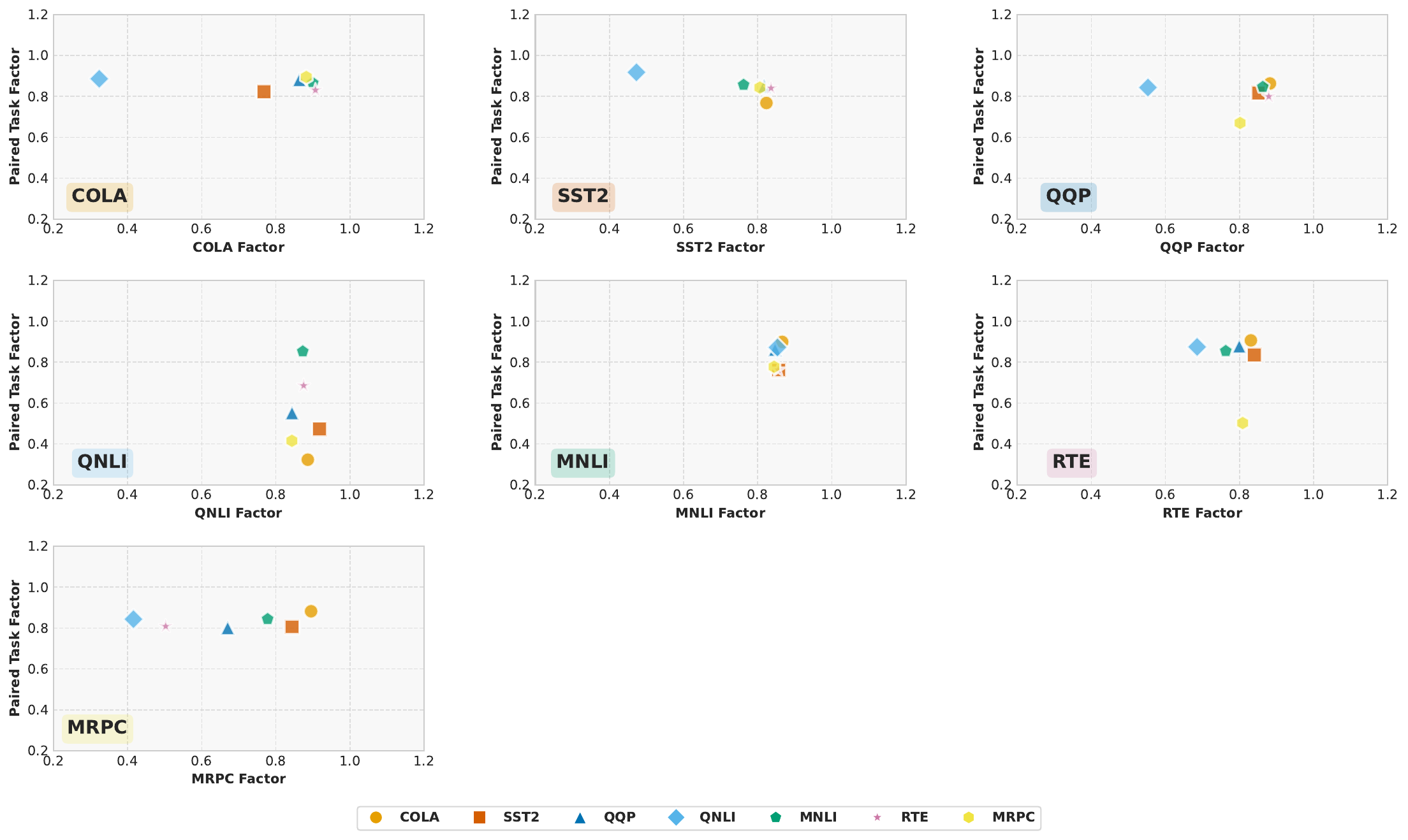}
    \caption{Visualization of coefficient values for a fixed reference task versus coefficient values for the remaining GLUE tasks. Each subplot corresponds to a different reference task.}
    \label{fig:glue_tasks_coefficients}
\end{figure}

\begin{figure}[tb]
    \centering
    \includegraphics[width=1\linewidth]{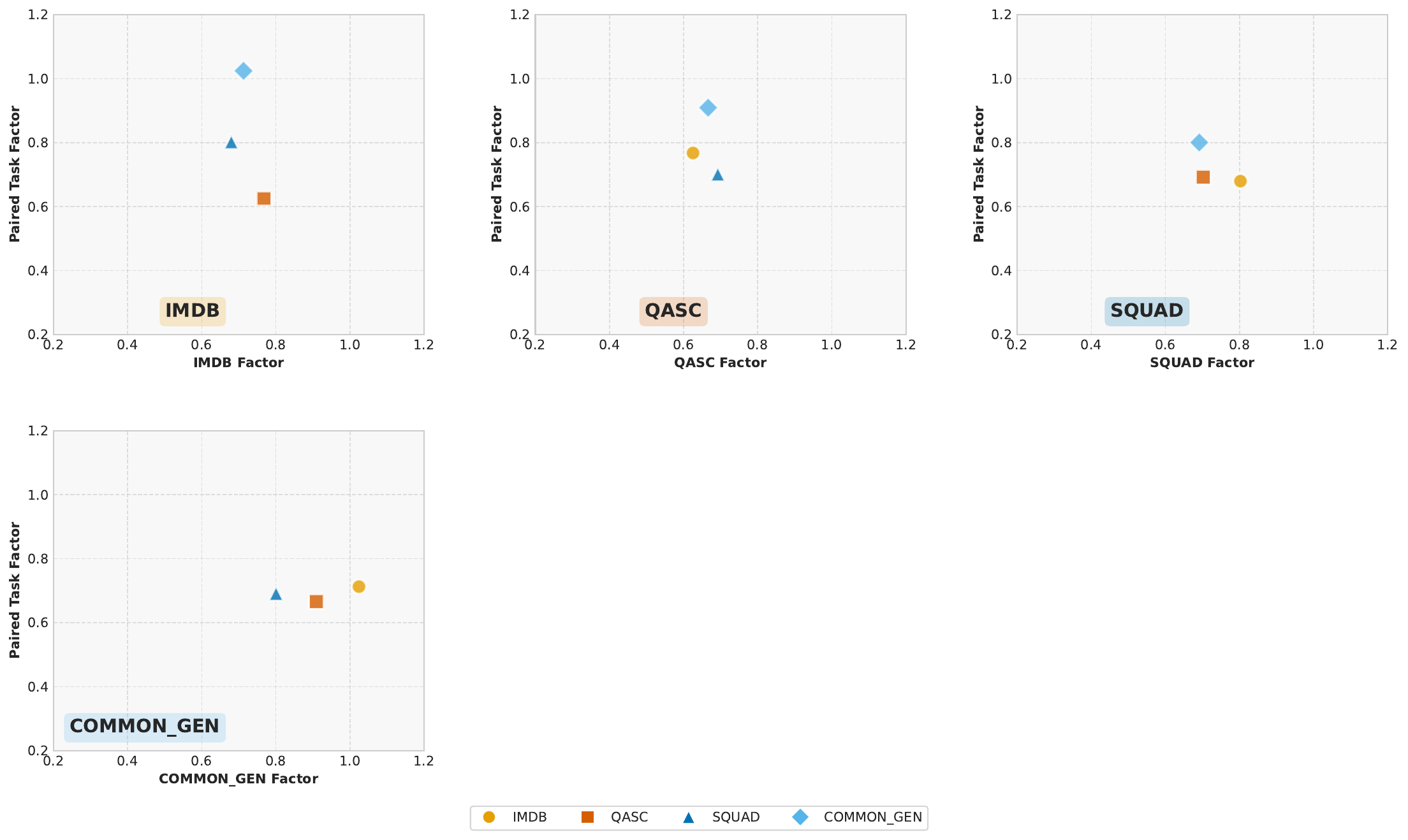}
    \caption{Coefficient values for each task at different T5 checkpoints. Each plot fixes a reference task and compares its coefficient to those of the other tasks.}
    \label{fig:t5_tasks_coefficients}
\end{figure}


\section{Training Settings}
\label{app:training-details}

\subsection{Data details}

As explained in~\autoref{sec:expe-proto}, we used the GLUE Benchmark~\cite{wang2019gluemultitaskbenchmarkanalysis} to perform our experiments. We recall on~\autoref{tab:glue-description} the description of tasks from this benchmark.

\begin{table}[tb]
    \centering
    \begin{tabular}{l|l}
        \toprule
         CoLA & detection of the linguistic acceptability of a sentence\\
         MNLI & natural language inference \\
         MRPC & paraphrase detection\\
         QNLI & question answering converted into natural language inference \\
         QQP & detection of equivalence between questions \\
         RTE & natural language inference \\
         SST2 & sentiment analysis \\
         \bottomrule
    \end{tabular}
    \caption{Description of the GLUE Benchmark}
    \label{tab:glue-description}
\end{table}

\subsection{Training details}

We propose in~\autoref{tab:training-params} training hyper-parameters we chose for our method, as well as for the Adamerging one since it also requires a training procedure. All the optimizations were done using the Adam optimizer~\cite{kingma2014adam} with default moments hyper-parameters.
From a practical standpoint, the hyperparameters we choose, both for the Adamerging method and for our own, allow us to maximize  multi-task performance on evaluation sets. 
While the choice of hyperparameters is relatively sensitive in the Adamerging method, our method appears to be more robust in terms of hyperparameter selection.
The other methods we used that had hyper-parameters were TIES and Multi-SLERP, for which we basically used the recommended recipes:
\begin{itemize}[leftmargin=*]
    \item \textbf{TIES:} We used the recommended recipe from~\cite{yadav2023tiesmergingresolvinginterferencemerging}, with $\lambda = 1$ and a mask rate of 0.2 (i.e., 80\% zeros in the mask).
    \item \textbf{Multi-SLERP:} The weights were set to $1/N$, where $N$ is the number of tasks.
\end{itemize}

\begin{table}[tb]
    \centering
    \begin{tabular}{lcccccc}
    \toprule
    \textbf{Method} & \textbf{Level} & \textbf{Batch Size} & \textbf{Epochs} & \textbf{Dataset Size} & \textbf{Scheduler (LR)} & \textbf{Init. Param.} \\
    \midrule
    $\mathrm{KL}$/$\mathrm{JS}$ (Classif)      & Task   & $4 \times$ \# tasks & 4 & 200 & $1\mathrm{e}{-2}$ & 0.5 \\
    $\mathrm{KL}$/$\mathrm{JS}$ (Classif)     & Layer  & $4 \times$ \# tasks & 4 & 400 & $1\mathrm{e}{-2}$ & 0.5 \\
    \hline
    $\mathrm{KL}$/$\mathrm{JS}$ (Gen.)      & Task   & $4 \times$ \# tasks & 4 & 200 & $1\mathrm{e}{-2}$ & 0.5 \\
    $\mathrm{KL}$/$\mathrm{JS}$ (Gen.)     & Layer  & $4 \times$ \# tasks & 4 & 400 & $1\mathrm{e}{-2}$ & 0.5 \\
    \hline \hline
    AdaMerging (Classif)     & Task   & $4 \times$ \# tasks & 5 & 200 & $1\mathrm{e}{-3}$ & 0.5 \\
    AdaMerging (Classif)     & Layer  & $4 \times$ \# tasks & 5 & 400 & $1\mathrm{e}{-3}$ & 0.5 \\
    \hline
    AdaMerging (Gen.)     & Task   & $4 \times$ \# tasks & 5 & 200 & $1\mathrm{e}{-2}$ & 0.5 \\
    AdaMerging (Gen.)    & Layer  & $4 \times$ \# tasks & 5 & 400 & $1\mathrm{e}{-2}$ & 0.5 \\
    \bottomrule
    \end{tabular}
    \caption{Training configurations.}
    \label{tab:training-params}
\end{table}

\section{Everything is task arithmetic}
\label{app:everything-is-ta}

Many different merging methods have emerged in the landscape of machine learning. Among them, task arithmetic~\cite{ilharco2022editing} is probably the most widely used. As a reminder, the merging function in the case of task arithmetic is defined as follows:
\[
    f\left(\theta_0, \set{\tau_t}, \Gamma \right) = \theta_0 + \sum_t \Gamma_t \times \tau_t.
\]
An interesting question that naturally arises is the following: Given a merging method $g\left(\theta_0, \{\tau_t\}, \Delta \right)$, can we find coefficients $\Gamma$ such that
$
    g\left(\theta_0, \{\tau_t\}, \Delta \right) = f\left(\theta_0, \{\tau_t\}, \Gamma \right)?
$
\\
If this is true, then we can state that
\[
\min_{\Gamma}~\sum_t \mathrm{D}_{X_t}\left(\theta_t \,\|\, f(\theta_0, \{\tau_t\}, \Gamma)\right) \leq \min_{\Delta}~\sum_t \mathrm{D}_{X_t}\left(\theta_t \,\|\, g(\theta_0, \{\tau_t\}, \Delta)\right).
\]
This inequality would highlight the strength of our method, as it encompasses the entire range of task arithmetic. As stated in~\autoref{rem:hp-est}, our method can be applied to any hyper-parameter differentiable merging approach.
As pointed out in~\cite{goddard2024arcee}, a wide range of merging methods are based on task arithmetic, with the main differences lying in the estimation of the merging coefficients. In the following, we provide an analysis of other merging methods to show that they can be expressed as model merging. This demonstrates that our method has the potential to achieve better results.

\paragraph{SLERP.} Spherical linear interpolation (SLERP)~\cite{wortsman2022modelsoupsaveragingweights} is a classical method used to combine vectors on a spherical manifold. For this method, we introduce a hyperparameter $t \in [0,1]$, and define SLERP as follows:
\[
    f(\theta_0, \{\tau_1, \tau_2\}, t) \triangleq \theta_0 + \frac{\sin((1-t)\Omega)}{\sin \Omega}\frac{\tau_1}{\|\tau_1\|} + \frac{\sin (t\Omega)}{\sin \Omega}\frac{\tau_2}{\|\tau_2\|},
\]
where $\Omega$ is the angle between $\tau_1$ and $\tau_2$.

\paragraph{Fisher Weight Averaging.} The Fisher weight averaging method, introduced in~\cite{matena2022merging}, is a merging technique that reduces to task arithmetic in the case of linear interpolation between specialized models. The merging coefficients for each model are based on the Fisher Information matrix and are determined by solving an optimization problem related to finding a centroid between models. 
One limitation, as pointed out in the original paper, is the high computational cost of estimating the Fisher Information matrix to obtain the merging coefficients. This estimation can also be numerically unstable, as the coefficients in the matrix can be close to zero. 
Additionally, this method introduces extra scaling hyperparameters that must be tuned.

\paragraph{RegMean.} The RegMean merging method, proposed in~\cite{jin2022dataless}, also reduces to task arithmetic, as it performs a linear interpolation between specialized models. This interpolation aims to minimize the $L_2$ distance between the merged model and the individual models, whereas our method is designed to minimize the $\mathrm{JS}$ (or $\mathrm{KL}$) divergence between models. The $L_2$ distance is a restrictive measure. 
Moreover, as stated in~\cite{blau2018perception,blau2019rethinking,zhang2023rate}, $L_2$ distance is a distortion measure, while $\mathrm{KL}$ and $\mathrm{JS}$ are perception measures. Minimizing perception distance appears to be more suitable for downstream applications, such as performing other tasks.

\paragraph{Kracher Mean.} The Kracher mean (or Riemannian centroid), originally formulated in~\cite{grove1973conjugate} can be used as a merging method which consists in finding some sort of centroid of a finite set of task vectors, denoted as $\set{\tau_t}$. To do so, we suppose that task vectors lies in a Finite dimension Hilbert Space $(H, <\cdot,\cdot>)$, where $<\cdot,\cdot>$ is the standard dot product onto this space and thus $\|\cdot\|$ is the associated norm. The Kracher mean is defined as following,
\[
    \tau_{\textrm{F}} \triangleq \arg \min_{\tau \in H} \sum_t \|\tau - \tau_t\|^2.
\]
The following proposition holds,
\begin{prop}\label{prop:kracher}
    Let $(H, <\cdot, \cdot>)$ be a finite dimension Hilbert space. 
    Then for all set of point $\set{\tau_t} \subset H$, representing task vectors, the solution of the Kracher mean (or equivalently the centroid) can be expressed in the task arithmetic framework.
\end{prop}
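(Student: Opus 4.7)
The plan is to minimise the Kracher objective explicitly and then read off the coefficients that realise the minimiser as a task arithmetic combination. Concretely, I would introduce $F(\tau) \triangleq \sum_t \|\tau - \tau_t\|^2$ on the finite-dimensional Hilbert space $(H, \langle \cdot,\cdot \rangle)$, and note that $F$ is strictly convex and coercive, so a unique minimiser exists.

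Next, I would compute the Fréchet derivative of $F$. Since $\nabla_\tau \|\tau - \tau_t\|^2 = 2(\tau - \tau_t)$, we obtain $\nabla F(\tau) = 2 \sum_t (\tau - \tau_t)$. Setting this equal to zero yields the first-order optimality condition $n\tau = \sum_t \tau_t$, i.e.
\[
    \tau_{\textrm{F}} = \frac{1}{n} \sum_{t=1}^{n} \tau_t,
\]
where $n$ is the number of task vectors. Strict convexity guarantees this critical point is the unique global minimiser.

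Finally, I would observe that this expression is exactly the output of the task arithmetic merging function $f(\theta_0, \{\tau_t\}, \Gamma)$ evaluated relative to $\theta_0$, with coefficients $\Gamma_t = \tfrac{1}{n}$ for all $t$. That is, taking $\Gamma = (\tfrac{1}{n}, \ldots, \tfrac{1}{n})$, the Kracher centroid corresponds to the merged model $\theta_0 + \tau_{\textrm{F}} = f(\theta_0, \{\tau_t\}, \Gamma)$, which matches isotropic model averaging as a particular instance of task arithmetic. This concludes the proof.

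There is essentially no deep obstacle: the argument hinges only on the fact that the Euclidean (Hilbert) Kracher mean collapses to the arithmetic mean. The only subtlety worth mentioning explicitly is that this collapse is a special feature of flat geometry; on a curved Riemannian manifold the same objective would not yield a closed-form linear combination, so the result genuinely uses that task vectors are treated as elements of a flat Hilbert space rather than of a nonlinear parameter manifold.
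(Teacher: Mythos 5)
Your proof is correct, but it takes a genuinely different route from the paper's. You minimise the objective explicitly: first-order optimality for the strictly convex, coercive functional $\sum_t \|\tau-\tau_t\|^2$ yields the arithmetic mean $\tau_{\mathrm F}=\frac1n\sum_t\tau_t$, which you then read off as task arithmetic with $\Gamma_t=\frac1n$. The paper never computes the minimiser at all: it decomposes $H=F\oplus F^{\perp}$ with $F=\mathrm{Span}(\{\tau_t\})$, writes $\tau=p_1+p_2$, observes that the objective depends on $p_2$ only through the non-negative term $\|p_2\|^2$, and concludes that projecting onto $F$ never increases the objective, so the minimiser lies in the span of the task vectors --- a representer-theorem-style argument. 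Your version buys a stronger, fully explicit conclusion (the Kracher mean in a flat Hilbert space \emph{is} isotropic model averaging, not merely \emph{some} task arithmetic combination), and your closing remark that this collapse is special to flat geometry is apt. The paper's version buys generality: the projection argument survives unchanged for weighted sums $\sum_t w_t\|\tau-\tau_t\|^2$ or indeed any objective that depends on $\tau$ only through $\|\tau\|$ and the inner products $\langle\tau,\tau_t\rangle$, where no closed form is available. The only degenerate case either argument should quietly exclude is an empty set of task vectors, for which the centroid is not defined.
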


\begin{proof}
    Let $\set{\tau_t} \subset H$. Let $F \triangleq \mathrm{Span}\left(\set{\tau_t}\right)$. Let $\tau \in H$. We have the following result,
    \[
        \tau = p_1 + p_2,~\text{s.t.}~p_1\in F, p_2 \in F^\perp.
    \]
    Then we have,
    \[
        \begin{split}
            \psi(\tau) &\triangleq \sum_t \|\tau - \tau_t\|^2, \\
            &= \sum_t <\tau - \tau_t, \tau - \tau_t>, \\
            &= \sum_t \|\tau\|^2 - 2<\tau, \tau_t> + \|\tau_t\|^2, \\
            &= \sum_t \|p_1\|^2 + \|p_2\|^2 - 2<p_1, \tau_t> + \|\tau_t\|^2.
        \end{split}
    \]
    Then by taking, $\tau^{\prime} = p_1$, we have $\psi(\tau^{\prime})\leq \psi(\tau)$, which leads to the following statement: $\forall \tau \in H$, $\exists \tau^\prime \in F$, such that,
    \[
        \psi(\tau^{\prime}) \leq \psi(\tau).
    \]
    Then $\arg \min_{\tau \in H} \psi(\tau) \in F$, which concludes the proof.
\end{proof}

\begin{rem}\label{rem:hilbert-dist}
    As stated in~\autoref{rem:centroid}, the method we proposed in this study can also be viewed as a centroid. However the framework we used does not allow to connect directly to the theory of Kracher mean. In fact, if one would want to formulate our method as a Kracher mean, the "distance" defined over the space of task vectors would be the following,
    \[
        d(\tau_i, \tau_j) = \mathrm{D}_{X_i}\left(\theta_i \| \theta_j \right).
    \]
    However, even in the case of the Jensen Shannon divergence this "distance" is not a mathematical one as it does not respect the property of the distance. Consequently it does not define a metric space and therefore even less a Hilbert space. 
    Then an interesting line of research would be to identify the possible distances one could define over the space of task vectors. From the result we just demonstrated, if we can verify that the distance can be derived from a dot product and thus induce a Hilbert Space, then we can conclude that the optimal solution lies in the framework of task arithmetic, giving thus added weight to this method and possibly offering more theoretical explanations as to why this method is in many cases the state of the art.
\end{rem}

\end{document}